\newcommand{\myparagraph}[1]{\noindent{\textbf{#1}~~}}
\newtheorem{lemma}{Lemma}[section]
\title{Revisiting LRP: Positional Attribution as the Missing
Ingredient for Transformer Explainability}
\author{%
  Yarden Bakish \\
  \small Tel-Aviv University\\
  \And
  Itamar Zimerman \\
  \small Tel-Aviv University \\
  \And
  Hila Chefer \\
  \small Tel-Aviv University \\
  \And
  Lior Wolf \\
  \small Tel-Aviv University \\
}
\begin{document}

\maketitle

\begin{abstract}
    The development of effective explainability tools for Transformers is a crucial pursuit in deep learning research. One of the most promising approaches in this domain is Layer-wise Relevance Propagation (LRP), which propagates relevance scores backward through the network to the input space by redistributing activation values based on predefined rules. However, existing LRP-based methods for Transformer explainability entirely overlook a critical component of the Transformer architecture: its positional encoding (PE), resulting in violation of the conservation property, and the loss of an important and unique type of relevance, which is also associated with structural and positional features. To address this limitation, we reformulate the input space for Transformer explainability as a set of position-token pairs
    . This allows us to propose specialized theoretically-grounded LRP rules designed to propagate attributions across various positional encoding methods, including Rotary, Learnable, and Absolute PE. Extensive experiments with {both fine-tuned classifiers and zero-shot foundation models, such as LLaMA 3,} demonstrate that our method significantly outperforms the state-of-the-art in both vision and NLP explainability tasks. Our code is publicly available.

\vspace{0.5em}
\hspace{.5em}
\includegraphics[width=1.25em,height=1.15em]{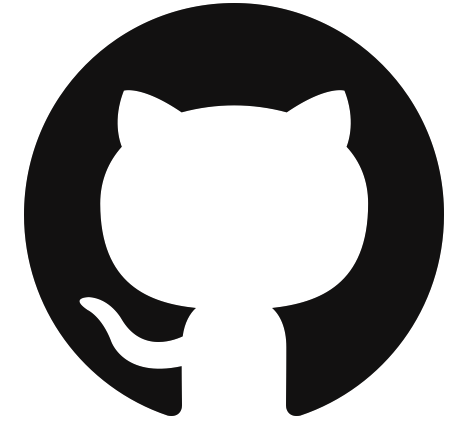}\hspace{.75em}
\parbox{\dimexpr\linewidth-7\fboxsep-7\fboxrule}{\url{https://github.com/YardenBakish/PE-AWARE-LRP}}
\vspace{-.5em}

\end{abstract}

\vspace{-7pt}
\section{Introduction\label{sec:intro}}
\vspace{-5pt}
Explainable AI (XAI) is increasingly vital in deep learning (DL), where models often achieve remarkable performance but operate as opaque ``black boxes''~\citep{arrieta2020explainable,das2020opportunities}. This lack of transparency reduces trust, limits user engagement, and complicates troubleshooting, thereby restricting the use of DL models in applications where decision-making transparency is essential. Consequently, developing XAI techniques for DL models has become an important research domain~\citep{samek2017explainable}. This task, however, is challenging, due to the inherent complexity of these models, which cannot be easily represented by simple functions.

Transformer-based architectures, which have become dominant in DL, present additional challenges for explainability due to their large scale, often containing billions of parameters. To address this, researchers have developed various attribution methods specifically designed for Transformers~\citep{chefer2021transformer,achtibat2024attnlrp,ali2022xai,abnar2020quantifying}. Among these, model-specific XAI techniques have gained prominence, providing explanations based on the model’s parameters, internal representations, and overall architecture.

The most effective model-specific XAI techniques, and the current state-of-the-art for Transformer explainability, are LRP-based, such as~\citep{achtibat2024attnlrp}. LRP is a well-established attribution technique that explains a model’s predictions by propagating relevance scores backward through the network, redistributing activation values based on predefined propagation rules. Unlike gradient-based methods, which often suffer from issues like vanishing gradients or numerical instabilities, LRP provides a more stable and precise way to trace how information flows through each layer.

\begin{wrapfigure}{r}{0.51\textwidth}
\centering
    \centering
    \begin{tabular}{@{\hskip 0.03in}c@{\hskip 0.03in}c@{\hskip 0.03in}c@{\hskip 0.03in}c@{\hskip 0.03in}} 
            \includegraphics[width=0.12\textwidth]{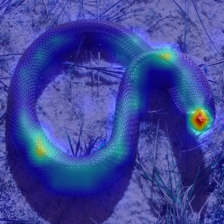} & 
        \includegraphics[width=0.12\textwidth]{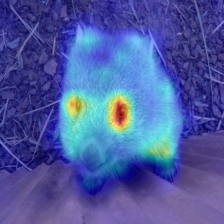} &
        \includegraphics[width=0.12\textwidth]{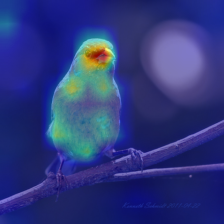} &
        \includegraphics[width=0.12\textwidth]{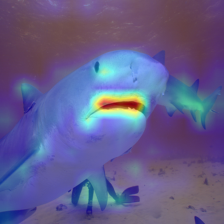} 
        \\
        \multicolumn{4}{c}{(a)} \\
                
        \includegraphics[width=0.12\textwidth]{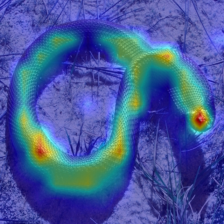} & 
        \includegraphics[width=0.12\textwidth]{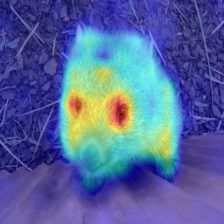} &
        \includegraphics[width=0.12\textwidth]{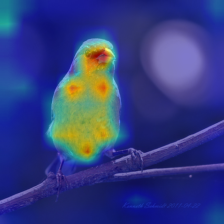} &
        \includegraphics[width=0.12\textwidth]{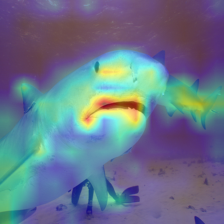} 
        \\
        \multicolumn{4}{c}{(b)}         
         \end{tabular}
\caption{\textbf{(a)} Explainability heatmaps by the state-of-the-art Attention-LRP method (AttnLRP)~\citep{achtibat2024attnlrp}. \textbf{(b)} The LRP heatmap obtained directly from our PE-aware LRP rules. 
The relevancy captured by the PE is less fragmented and captures more of the object.}
\vspace{-13pt}
\label{fig:Motivation}
\end{wrapfigure}
Recently, several refinements have been proposed to improve the stability and faithfulness of LRP rules for Transformers, leading to more robust and reliable interpretability techniques. Notable examples include~\citep{chefer2021transformer,ali2022xai} and \citep{achtibat2024attnlrp}, which introduce custom rules for propagating LRP through attention mechanisms, layer normalization, and other key components. Despite these advancements, we identify a critical gap in this extensive line of work: all existing LRP-based methods for Transformers overlook the need for PE-aware LRP rules and do not propagate attribution through positional encoding. This omission results in the loss of a key aspect of relevancy related to positional and structural concepts, limiting the ability to provide faithful and comprehensive explanations.


To mitigate this problem, we propose Positional-Aware LRP (PA-LRP), a novel technique that significantly improves upon previous methods through two fundamental modifications: (i) Reformulating the input space of the Transformer explainability problem to incorporate positional information. Instead of relying solely on the vocabulary space in NLP and the patch space in vision, we define the input space as a set of position-token pairs. (ii) Introducing the first LRP rules specifically designed to propagate relevance across standard positional encoding (PE) layers, including learnable PE, Rotary PE~\citep{su2024roformer}, and others. To enhance stability and faithfulness, our rules are further improved through techniques such as reparameterization of PE layers, linearization, and defining an appropriate sink for positional relevance to ensure that position-associated information is properly absorbed, which we validate to be crucial for precise propagation. 
Moreover, we provide complementary theoretical analysis, proving that our rules both satisfy the conservation property, and able to derive explanations more faithfully.

    \textbf{Our main contributions} 
    consist of the following: (i) We identify a critical gap in current LRP-based XAI techniques for Transformers: they overlook the attribution of positional encodings (PE). This omission results in a violation of the conservation property for input-level PE, as shown in Lemma~\ref{lem:lemma1},  and leads to unfaithful heatmaps when handling positional features, as demonstrated in Lemma~\ref{lem:lemma3}. We empirically validate that this omission is a critical limitation by significantly outperforming existing methods, and demonstrating that in certain cases, assigning relevance to PE alone can surpass standard state-of-the-art Transformer explainability techniques, showing that this signal is significant
    .
    Additionally, the obtained signal is complementary and distinct from the non-positional signal, better capturing spatial, positional, and structural relationships, as shown in Figure~\ref{fig:Motivation}. (ii) We introduce PA-LRP, a theoretically grounded and PE-aware technique for assigning relevance in Transformers. As shown in Tables~\ref{tab:pertubatoinNLp}--~\ref{tab:perturbationZeroShot}, PA-LRP significantly outperforms previous methods across both fine-tuned classifiers and zero-shot foundation models, in both NLP and vision tasks. (iii) Providing an open-source and user-friendly implementation of our method, along with demos and practical examples, to facilitate adoption by the broader research and practitioner community.
\vspace{-4pt}
\section{Background and Related Work\label{sec:relatedWork}}
\vspace{-4pt}
In this section, we describe the scientific context for discussing LRP-based Transformer explainability, along with the necessary terminology and symbols needed to describe our method.%
\vspace{-3pt}
\subsection{Positional Encoding in Transformers}
\vspace{-3pt}
Transformer-based ~\citep{vaswani2017attention} architectures rely on self-attention, which computes contextual relationships between tokens using:
{\small
\begin{equation}
    \text{Attention}(X) = \text{Softmax} \left( \frac{QK^T}{\sqrt{d_k}} \right) V,\quad Q = XW_Q,\quad K = XW_K,\quad V = XW_V
\end{equation}
}
%
where, $K, Q, V$ represent key, query, and value matrices respectively, $d_k$ is the embedding dimension, and $W_Q, W_K, W_V$ are learnable linear projection matrices. This attention mechanism is duplicated over several ``heads'' and is wrapped by standard DL peripherals such as Layer Normalization, FFNs, and skip connections, forming the core structure of a Transformer model by:
{\small
\begin{equation}
    X' = \text{LayerNorm} \left( X + \text{Attention}(X) \right), \quad
    X'' = \text{LayerNorm} \left( X' + \text{FFN}(X') \right) \,.
\end{equation}
}
where \textit{FFN} applies a two-layer linear transformation with activations in the middle.

Transformers operate on sets of tokens rather than ordered sequences, making them permutation-invariant by design. Unlike architectures with built-in order sensitivity such as RNNs~\citep{hochreiter1997long,jordan1997serial}, Transformers require explicit positional encoding (PE) to capture sequence structure. PE can be introduced at different stages of the model: it can be added to token embeddings at the input layer, as seen in learnable PE and sinusoidal PE~\citep{vaswani2017attention}, or integrated within the attention mechanism at each layer, as employed in Rotary PE (RoPE)~\citep{su2024roformer} and Alibi~\citep{presstrain}. The key insight of this paper is that while PE is well known for its important role in the forward pass~\citep{dufter2022position}, its crucial role in propagation-based XAI methods, such as LRP, has been largely overlooked, leading to violations of conservation and the loss of significant relevance, which often carries distinctive positional and structural meanings.

\noindent\noindent\textbf{Learnable PE.\quad} Learnable PE represents positions as trainable parameters, allowing the model to learn position representations directly from data. This approach offers flexibility and adaptability.

\noindent\noindent\textbf{Sinusoidal PE.\quad} Sinusoidal PE, as applied by the original Transformer model~\citep{vaswani2017attention}, encodes positions using sine and cosine functions with different non-trainable frequencies. Because it is based on absolute positions, it is less effective in tasks where relative positional information is more important.

\noindent\textbf{Rotary PE (RoPE).\quad} RoPE~\citep{su2024roformer} incorporates positional information by rotating token embeddings in a structured manner, enabling the model to naturally encode relative positions. Specifically, each key and query vector is transformed using a per-position block-diagonal rotation matrix. Unlike learnable or sinusoidal PEs, RoPE encodes relative positional relationships through the multiplication of rotation matrices. Due to its effectiveness, many popular foundation models, including SAM2~\citep{ravi2024sam},Pythia~\citep{biderman2023pythia}, LLaMA~\citep{touvron2023llama},  Qwen~\citep{bai2023qwen}, Gemma~\citep{team2024gemma}, and others are built on top or RoPE.

Other PE techniques, such as ALiBi~\citep{presstrain} and relative PEs~\citep{shaw2018self,raffel2020exploring}, are described in Appendix~\ref{app:addPE}.

\vspace{-3pt}
\subsection{Model-Specific XAI and LRP}
\vspace{-3pt}
Methods for explaining neural models have been extensively studied in the context of DNNs~\citep{zhang2021survey,linardatos2020explainable}, particularly in NLP~\citep{arras-etal-2017-explaining,yuan2021explaining} and computer vision~\citep{selvaraju2017grad,lang2021explaining}, and across various architectures including transformers~\citep{chefer2021generic,kokalj2021bert}, RNNs~\citep{arras2019explaining,zhang2021survey}, CNNs~\citep{ibrahim2023explainable,zhang2018visual}, state-space models~\citep{jafari2024mambalrp,ali2024hidden}, and others. A widely adopted strategy for this task is the use of model-specific techniques, which exploit the internal architecture and parameters of neural models to generate explanations. One notable method in this category is LRP~\citep{bach2015pixel,montavon2019layer}, which propagates relevance scores, denoted by $\mathcal{R}(\cdot)$, backwards through the network by redistributing activation values. Propagation relies on predefined rules and interactions between tokens.

\noindent\textbf{LRP.\quad}%
LRP is an evolution of gradient-based methods, such as Input $\times$ Gradient~\citep{shrikumar2017learning,baehrens2010explain}, which often suffer from issues like numerical instabilities and gradient shattering~\citep{balduzzi2017shattered}. LRP enhances backpropagation rules by enforcing two key principles: (i) the conservation property, which ensures that the total relevance is preserved across layers. Namely, for a layer $M$, where $Y=M(X)$, the relevance of the output $\mathcal{R}(Y)$ is equal to the relevance of the input $\mathcal{R}(X)$. (ii) The prevention of numerical instabilities during propagation. To achieve these goals, LRP rules are often derived from the Deep Taylor Decomposition principle~\citep{montavon2017explaining}, redistributing relevance scores at each layer based on the first-order Taylor expansion of the layer’s function.

\vspace{-5pt}
\subsection{XAI for Transformers} 
\vspace{-4pt}
The first model-specific XAI methods for Transformers were based on attention maps~\citep{caron2021emerging,clark2019does}, leveraging attention scores to quantify the contribution of each token to others across layers. Building on this approach,~\citet{abnar2020quantifying} introduced the attention rollout technique, which aggregates attention matrices across multiple layers to provide a more holistic explanation. However,~\citet{jain2019attention} later demonstrated that attention-based techniques can be misleading, as attention scores do not always correlate with gradient-based feature importance measures or actual model behavior. To address these limitations,~\citet{chefer2021transformer} developed a hybrid XAI method that combines LRP scores with attention maps, marking a breakthrough in the field by improving attribution fidelity. 

Alternatively, an extensive body of work focuses on model-agnostic methods~\citep{mosca2022shap,kokalj2021bert,cheng2025unifying}, however, these approaches often exhibit lower performance compared to model-specific techniques.

Purely LRP-based XAI methods for Transformers were first introduced in~\citep{voita2021analyzing} and later refined by~\citet{ali2022xai}, who developed custom LRP rules tailored for LayerNorm and attention layers to preserve conservation properties and ensure numerical stability. More recently,~\citet{achtibat2024attnlrp} further improved this approach by designing more faithful propagation rules for self-attention, achieving state-of-the-art performance in Transformer explainability. To the best of our knowledge, this represents the most advanced technique in the field and serves as our primary baseline. 

Interestingly, despite extensive research in this area, none of these approaches propagate relevance through PE layers. This omission leads to a loss of significant relevance associated with positional and structural features, ultimately resulting in less faithful and holistic attributions.

\vspace{-5pt}
\section{Method\label{sec:method}}
\vspace{-5pt}
\begin{wrapfigure}{r}{0.51\textwidth}
    \centering
    \vspace{-38pt}
    \includegraphics[width=0.91\linewidth]{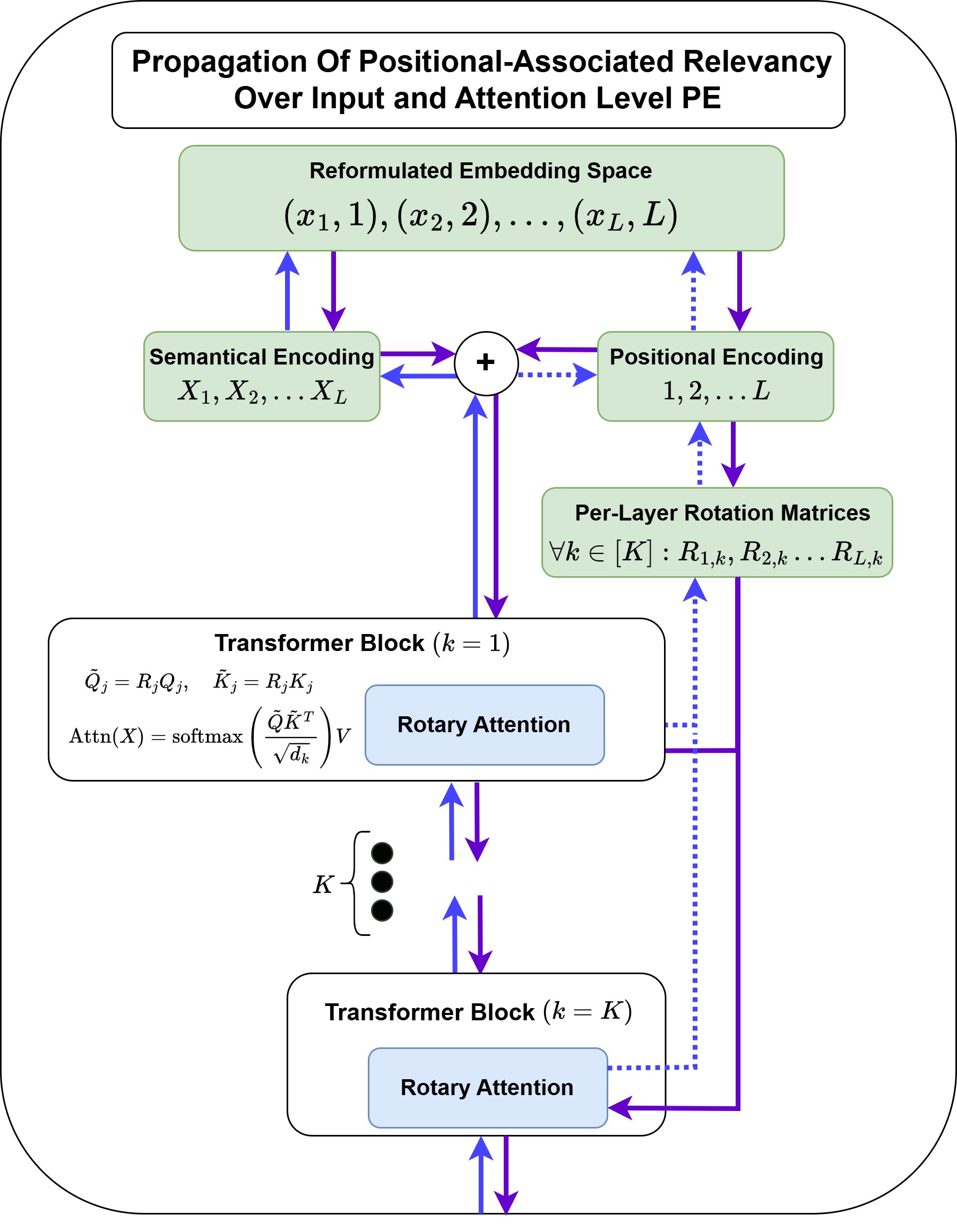}
    \vspace{-7pt}
    \caption{\small Visualization of our method for propagating PE-associated relevance. Purple arrows indicate the forward path, while blue arrows represent the LRP propagation rules. Dashed arrows denote custom PE-aware rules defined in our method.}
    \label{fig:methodFig}
    \vspace{-20pt}
\end{wrapfigure}

In this section, we describe our PE-aware LRP rules. We first revise the input space used in the Transformer explainability problem in Section~\ref{subsec:reform}. Then, building upon this formulation, we define our custom LRP rules in Section~\ref{subsec:PELRP-INPUT} and Section~\ref{subsec:PELRP-ATTN}. Finally, in Section~\ref{subsec:theory}, we prove that our PE-aware LRP rules are theoretically grounded.

\vspace{-3pt}
\subsection{Reformulating the Input Space\label{subsec:reform}}
\vspace{-3pt}

To comprehensively attribute positional information, we must define a \textbf{sink} that absorbs PE-associated relevance. To this end, we reformulate the explainability problem for Transformers. Given a sequence of embedded tokens of length $L$, denoted by $E_1, \ldots, E_L$
, where $D$ is the embedding size
, previous methods traditionally define the input space $\mathcal{S}$ as %
\begin{equation}
    \mathcal{S} = \{E_i \mid i \in [L], E_i \in \mathbb{R}^D\}\,.
\end{equation}
%
%
In contrast, we reformulate the input space as token-position pairs, with positional features defined separately for each layer, as follows:
\begin{equation}\label{eq:newInputSpace}
    \mathcal{S} = \Big{\{}\Big{(}E_i, (P_{i,1},P_{i,2},\ldots,P_{i,K})\Big{)} \mid i \in [L],\quad E_i \in \mathbb{R}^D,\quad P_{i,1},\ldots,P_{i,K} \in \mathbb{R}^{D'} \Big{\}}\,,
\end{equation}
where 
$D'$ is the dimension of the positional embeddings, and $K$ is the number of layers. Thus, in our formulation, each token in the input space consists of two ingredients: one representing the semantic embedding $E_i$ for all token indices $i\in [L]$, and the other representing the per-layer positional embedding $P_{i,k}$ for all layers $k\in [K]$.

The use of separate per-layer sinks for positional relevance ensures that the omission of certain positional features in one layer does not obscure or override essential features captured in other layers, and that important positional attributions are not discarded.
%
%





Building upon the formulation of Eq.~\ref{eq:newInputSpace}, the next two sections define LRP rules that enable stable propagation of relevance across PE components. In Section~\ref{subsec:PELRP-INPUT}, we present our rules for input-level PE, while Section~\ref{subsec:PELRP-ATTN} outlines our rules for attention-level PE.

\vspace{-4pt}
\subsection{LRP-rules for Input Level PE\label{subsec:PELRP-INPUT}}
\vspace{-3pt}

We begin with the simplest form of positional encoding—learnable PE—and then demonstrate that other input-level PEs can be reparameterized in a similar manner. Since PE is incorporated only at the input layer, we assume for brevity that $P_i$ is a vector rather than a matrix, namely $P_i = P_{i,1}$. We also tie the embedding dimensions of both the semantical and positional vectors ($D=D'$). 

\myparagraph{Learnable PE.} %
This layer learns positional information during training through a positional embedding matrix $P'\in \mathbb{R}^{L' \times D}$ where $D$ represents the embedding dimension of positional information, and $L'$ denotes the maximum sequence length. For each sample, the positional and semantic embeddings are summed to obtain the final input representation. Formally, the combined embedding for the token at position $i$ is given by $P'_i + E_i$ where $P'_i$ is the i-th row of the positional embedding matrix $P'$.
Thus, we can propagate relevance from the input of the first transformer block of the $i$-th token $z_i$ denoted by $\mathcal{R}(z_i)$, to the positional component $P'_i$ by using the standard LRP-$\epsilon$ rule for addition \citep{achtibat2024attnlrp}: %
%
{\vspace{-5pt}
\begin{equation}\label{eq:LernablePELRP}
    \text{PA-LRP for input-level PE}: 
    \mathcal{R}(P'_i) = P'_i\frac{\mathcal{R}(z_i)}{P'_{i}+E_{i} +\epsilon}\,. 
\end{equation}
}
%

\myparagraph{Sinusoidal PE.}%
This method encodes position information via a unique vector of sine and cosine values constructed by:
\begin{equation}\label{eq:sinPE}
    \text{Sinusoidal PE(i)}[2d] = \sin\left(\frac{i}{10000^{\frac{2d}{D}}}\right),\quad
\text{Sinusoidal PE(i)}[2d+1] = \cos\left(\frac{i}{10000^{\frac{2d}{D}}}\right)\,.
\end{equation}
Thus, the values derived from Eq.~\ref{eq:sinPE} can be used to reparameterize the positional embedding matrix $P'$, replacing the learned vectors with their corresponding sine and cosine values. Such reparameterization eliminates the need to propagate gradients through non-linear functions such as sine and cosine, improving efficiency and stability.

\vspace{-4pt}
\subsection{LRP-rules for Attention-level PE\label{subsec:PELRP-ATTN}}
\vspace{-3pt}
For attention-level PE, we focus on describing the PA-LRP rules for RoPE~\citep{su2024roformer} as a representative example. For the derivation of the PA-LRP rules for ALiBi~\citep{presstrain}, we refer the reader to Appendix~\ref{app:AlibiRules}.
At each layer $k$, RoPE modifies the queries ($Q$) and keys ($K$) matrices before computing the attention scores. This modification is done by multiplying each key and query vector by a position-dependent rotation matrix $R_{i,k} \in \mathbb{R}^{D \times D}$ where $i \in [L]$. The rotation matrix is a block-diagonal matrix defined as follows: %
{\small
\begin{equation}\label{eq:rotaryMatrices}
\forall i\in [L], k \in [K]: R_{i,k} =
\begin{bmatrix}
\cos \theta_i^{(1)} & -\sin \theta_i^{(1)} & \dots & 0 & 0 \\
\sin \theta_i^{(1)} & \cos \theta_i^{(1)}  & \dots & 0 & 0 \\
\vdots & \vdots & \ddots & \vdots & \vdots \\
0 & 0 & \dots & \cos \theta_i^{(D/2)} & -\sin \theta_i^{(D/2)} \\
0 & 0 & \dots & \sin \theta_i^{(D/2)} & \cos \theta_i^{(D/2)}
\end{bmatrix}
\end{equation}
}%
where each rotation angle \( \theta_i^{(m)} \) is defined as 
%
$\theta_i^{(m)} = i \omega_m$, where $\omega_i = 10000^{\frac{D}{2(m-1)}}$.

Note that in RoPE, as in other attention-level positional encodings, the positional information is represented by a matrix $R_{i,k}$, not a vector. Accordingly, we assume:
\begin{equation}\label{eq:flatten}
 P_{i,k} = \text{Flattening}(R_{i,k}), \quad D' = D^2\,.    
\end{equation}
Thus, we can propagate relevance from the matrix $\mathcal{R}(R_{i,k})$ to the vector $\mathcal{R}(P_{i,k})$ by flattening it: 
\begin{equation}\label{eq:flattenRelevance}
 \mathcal{R}(P_{i,k}) = \text{Flattening}(\mathcal{R}(R_{i,k}))\,.    
\end{equation}
Now, a key remaining step is to define how relevance should be propagated to $R_{i,k}$. The RoPE computation is executed before computing the attention scores, transforming the per-position queries and keys as follows: %
\begin{equation}\nonumber 
\forall i \in [L]
: \tilde{\bm{Q}}_i = R_{i,k} \bm{Q}_i, \quad \tilde{\bm{K}}_i = R_{i,k} \bm{K}_i,\quad
    \text{RoPE Attention}(X) = \text{Softmax} \left( \frac{\tilde{\bm{Q}}\tilde{\bm{K}}^T}{\sqrt{d_k}} \right) V\,.
\end{equation}
Our formulation builds on top of AttnLRP~\citep{achtibat2024attnlrp}, which propagates relevance over the queries $\tilde{\bm{Q}}$ and keys $\tilde{\bm{K}}$, resulting in their corresponding relevance scores $\mathcal{R}(\tilde{\bm{Q}})$,$\mathcal{R}(\tilde{\bm{K}})$. 
To propagate relevance from these matrices to the rotation matrices $R_{i,k}$, we apply the LRP rule for matrix multiplication employed by~\citet{achtibat2024attnlrp} separately to the key and query matrices, and then sum the resulting terms to produce a final attribution map per attention layer, as follows:
%
%
%
%
%
\begin{equation}\label{eq:PA-LRP-matmul}
   \forall i \in [L]
   : \mathcal{R}(R_{i,k}) = \frac{1}{2}\mathcal{R}(\tilde{\bm{Q_{i}}}) + \frac{1}{2}\mathcal{R}(\tilde{\bm{K_{i}}})\,.
\end{equation}

Finally, we aggregate the relevance scores across K layers and across feature dimension D, summing only the positive contributions, following a strategy similar to that proposed by~\citet{chefer2021transformer} and~\citet{xiong2024explainableartificialintelligencexai}:
\begin{equation}\label{eq:aggrPositive}
    \mathcal{R}_i = \sum_{d \in [D]} E_i [d]^+ + \sum_{k \in [K]}\sum_{d' \in [D']} P_{i,k}[d]^+ \,,
\end{equation}
where $\mathcal{R}_i$ is the final relevance for token $i$, and $(\cdot)^+$ denotes the ReLU function, which filters out negative values. %

\myparagraph{Overall Method.} %
Our PA-LRP rules allow us to assign relevance to the positional part of the input space. For the non-positional part, we use the same rules as defined in AttnLRP~\citep{achtibat2024attnlrp}.
It is worth noting that, although our rules in Eqs.\ref{eq:LernablePELRP},\ref{eq:flattenRelevance},\ref{eq:PA-LRP-matmul}, and \ref{eq:aggrPositive} are built on top of the AttnLRP framework, they are not limited to it. Our input-level PE rules can be decoupled and applied to any LRP method, while the attention-level PE rules can be integrated with alternative formulations, as long as they propagate relevance through the attention matrices and preserve the connection between PE and the computational graph.

As a result, similar to other LRP methods, our approach can produce explainability maps with computational efficiency comparable to a single backward pass. We further clarify that although our method introduces several modifications in the forward path and input space, it does not require any changes to the transformer itself. Instead, these modifications propose an equivalent forward path that allows us to better define the propagation rules.

\vspace{-4pt}
\subsection{Theoretical Analysis\label{subsec:theory}}
\vspace{-3pt}
To support our PA-LRP rules, we provide theoretical evidence demonstrating that they satisfy the key LRP criteria. First, the following two lemmas prove that our proposed LRP rules satisfy the conservation property. 

\begin{lemma}\label{lem:lemma1}
\textit{For input-level PE transformers, the conservation property is violated when disregarding the positional embeddings' relevancy scores.}
\end{lemma}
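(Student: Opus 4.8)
The plan is to trace the conserved relevance down to the input of the first transformer block and show that the single branch retained by a PE-agnostic pass necessarily falls short of that total by exactly the relevance mass carried on the positional branch. First I would fix the input-level setup of Section~\ref{subsec:PELRP-INPUT}: for token $i$ the block input is the additive node $z_i = E_i + P'_i$, so any faithful LRP pass must treat $z_i$ as an addition layer with two incoming summands. Applying the LRP-$\epsilon$ addition rule of Eq.~\ref{eq:LernablePELRP} to both summands and taking $\epsilon \to 0$ yields the exact two-term split $\mathcal{R}(z_i) = \mathcal{R}(E_i) + \mathcal{R}(P'_i)$, which is precisely the conservation statement at this node.

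Next I would formalize what ``disregarding the positional embeddings' relevancy'' means: a PE-agnostic method never instantiates the $P'_i$ summand in its computational graph, so the only relevance it deposits in the input space is $\sum_i \mathcal{R}(E_i)$. Summing the split above over all tokens and feature dimensions gives $\sum_i \mathcal{R}(E_i) = \sum_i \mathcal{R}(z_i) - \sum_i \mathcal{R}(P'_i)$, so the relevance accounted for at the input differs from the conserved quantity $\sum_i \mathcal{R}(z_i)$ already present at the block input by exactly the residual $\Delta = \sum_i \mathcal{R}(P'_i)$.

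The final step is to argue that $\Delta$ is generically nonzero, so the violation is genuine rather than vacuous. Substituting the $\epsilon \to 0$ form of Eq.~\ref{eq:LernablePELRP} gives $\Delta = \sum_i \sum_d P'_i[d]\,\mathcal{R}(z_i)[d]/(P'_i[d]+E_i[d])$, which vanishes only in degenerate cases such as $P'_i \equiv 0$ (no positional encoding) or $\mathcal{R}(z_i) \equiv 0$. For any nontrivial positional embedding and any input receiving nonzero relevance we have $\Delta \neq 0$, establishing that conservation fails.

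I expect the main obstacle to be pinning down ``disregarding'' in a way faithful to how existing baselines actually propagate relevance, since there are two readings: a method may silently reassign the dropped mass to $E_i$ (overcounting the semantic branch) or simply discard it (undercounting the total). I would dispatch both: under the first reading the implied per-node equality $\mathcal{R}(E_i) = \mathcal{R}(z_i)$ contradicts the correct LRP-$\epsilon$ split unless $P'_i = 0$, while under the second the total relevance strictly drops by $\Delta$. Either convention yields a violation, so the lemma holds regardless of implementation.
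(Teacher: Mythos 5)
Your proof is correct and takes essentially the same route as the paper's: both treat the first-block input as the additive node $Z = E + P'$, invoke the LRP addition split $\mathcal{R}(Z) = \mathcal{R}(E) + \mathcal{R}(P')$ under conservation, and conclude that a PE-agnostic pass, which retains only $\mathcal{R}(E)$, falls short of the conserved total by the positional mass. If anything you are more careful than the paper, which simply asserts $\sum (\mathcal{R}_{E} + \mathcal{R}_{P}) \neq \sum \mathcal{R}_{E}$ without your explicit non-degeneracy check that $\Delta = \sum_i \mathcal{R}(P'_i)$ vanishes only when $P'_i \equiv 0$ or $\mathcal{R}(z_i) \equiv 0$, and without distinguishing the two readings of ``disregarding'' that you dispatch.
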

\begin{lemma}\label{lem:lemma2}
\textit{For attention‐level PE transformers, our PE‐LRP rules satisfy the conservation property.}
\end{lemma}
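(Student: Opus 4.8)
The plan is to prove conservation one layer at a time for the RoPE operation, viewed as the map that sends the inputs $(R_{i,k}, \bm{Q}_i, \bm{K}_i)$ to the outputs $(\tilde{\bm{Q}}_i, \tilde{\bm{K}}_i)$ through the two bilinear products $\tilde{\bm{Q}}_i = R_{i,k}\bm{Q}_i$ and $\tilde{\bm{K}}_i = R_{i,k}\bm{K}_i$. Taking the output relevances $\mathcal{R}(\tilde{\bm{Q}}_i)$ and $\mathcal{R}(\tilde{\bm{K}}_i)$ as given (already produced by AttnLRP propagating through the attention scores), conservation across this layer amounts to the per-position identity $\mathcal{R}(R_{i,k}) + \mathcal{R}(\bm{Q}_i) + \mathcal{R}(\bm{K}_i) = \mathcal{R}(\tilde{\bm{Q}}_i) + \mathcal{R}(\tilde{\bm{K}}_i)$, from which summing over $i\in[L]$ yields the global statement. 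First I would dispose of the flattening step of Eq.~\ref{eq:flattenRelevance}: it is a pure reshape, so $\sum \mathcal{R}(P_{i,k}) = \sum \mathcal{R}(R_{i,k})$ trivially, and conservation phrased for $R_{i,k}$ is equivalent to conservation phrased for the sink $P_{i,k}$.

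The core step is to invoke the conservation of the AttnLRP matrix-multiplication rule on each product separately. Applying it to $\tilde{\bm{Q}}_i = R_{i,k}\bm{Q}_i$, I would use the fact that the product is \emph{bilinear} in its two factors: the Deep-Taylor-based rule distributes the output relevance evenly, so that (after summing over the $D$ feature indices) the rotation matrix receives $\tfrac12\mathcal{R}(\tilde{\bm{Q}}_i)$ and $\bm{Q}_i$ receives $\tfrac12\mathcal{R}(\tilde{\bm{Q}}_i)$, the two halves recombining to $\mathcal{R}(\tilde{\bm{Q}}_i)$. The same rule on $\tilde{\bm{K}}_i = R_{i,k}\bm{K}_i$ sends $\tfrac12\mathcal{R}(\tilde{\bm{K}}_i)$ to $R_{i,k}$ and $\tfrac12\mathcal{R}(\tilde{\bm{K}}_i)$ to $\bm{K}_i$. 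Because the rotation matrix $R_{i,k}$ is \emph{shared} between the two products, its total relevance is the sum of the two contributions, which is precisely Eq.~\ref{eq:PA-LRP-matmul}.

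To close, I would add the three input relevances and check that the bilinear halves telescope:
\[
\mathcal{R}(R_{i,k}) + \mathcal{R}(\bm{Q}_i) + \mathcal{R}(\bm{K}_i) = \Big(\tfrac12\mathcal{R}(\tilde{\bm{Q}}_i) + \tfrac12\mathcal{R}(\tilde{\bm{K}}_i)\Big) + \tfrac12\mathcal{R}(\tilde{\bm{Q}}_i) + \tfrac12\mathcal{R}(\tilde{\bm{K}}_i) = \mathcal{R}(\tilde{\bm{Q}}_i) + \mathcal{R}(\tilde{\bm{K}}_i),
\]
giving per-position conservation; summing over $i$ and applying the reshape identity finishes the proof. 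The main obstacle I anticipate is not this final algebra but the bookkeeping around the shared factor $R_{i,k}$: I must verify that the AttnLRP matrix-multiplication rule is genuinely conservative at the element level and that merging the two products neither double-counts nor drops relevance, so that the $\tfrac12$ weights in Eq.~\ref{eq:PA-LRP-matmul} are exactly those forced by bilinearity rather than a heuristic choice.
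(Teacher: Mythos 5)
Your proposal is correct and takes essentially the same route as the paper: both arguments rest on the conservation of AttnLRP's uniform (even-split) matrix-multiplication rule applied to the shared rotation factor, with the per-layer positional sinks absorbing $\mathcal{R}(R_{i,k})$ and the remaining relevance telescoping down through the layers to give $\mathcal{R}^{(M)} = \mathcal{R}_{E} + \sum_{l}\mathcal{R}_{P}^{(l)}$. Your explicit per-position identity $\mathcal{R}(R_{i,k}) + \mathcal{R}(\bm{Q}_i) + \mathcal{R}(\bm{K}_i) = \mathcal{R}(\tilde{\bm{Q}}_i) + \mathcal{R}(\tilde{\bm{K}}_i)$, with the $\tfrac12$ bookkeeping and the reshape identity for flattening, is exactly the justification of the recursion step $\mathcal{R}^{(l)} = \mathcal{R}^{(l-1)} + \mathcal{R}_{P}^{(l)}$ that the paper's proof states more briefly.
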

Next, we present a lemma based on a key example illustrating that existing methods exhibit low faithfulness. In particular, we show that in simplified settings, applying standard LRP techniques without incorporating position-aware LRP rules leads to unfaithful explanations in tasks that heavily depend on positional features. 
\begin{lemma}\label{lem:lemma3}
\textit{For attention‐level PE transformers, current LRP attribution rules achieve low faithfulness, especially when considering positional features.}
\end{lemma}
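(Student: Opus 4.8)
The plan is to proceed by an explicit, analytically tractable counterexample, following the ``key example'' strategy hinted at in the statement: I would exhibit a minimal attention-level PE transformer together with a task whose output is governed entirely by positional structure, and show that standard (PE-unaware) LRP provably assigns negligible relevance to the features that actually drive the prediction, so that any reasonable faithfulness functional must penalize it.

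First I would fix the simplest non-trivial instantiation: a single RoPE attention head acting on a sequence in which every token carries the \emph{same} semantic embedding $E_i = E$. Because the tokens are then semantically indistinguishable, the only source of variation across positions is the rotation matrix $R_{i,k}$ of Eq.~\ref{eq:rotaryMatrices}. I would design the value/output path so that the prediction is a provably position-dependent function: the attention logits satisfy $\tilde{Q}_i^\top \tilde{K}_j = Q^\top R_{i,k}^\top R_{j,k} K$, which for identical embeddings depends only on the relative position $j-i$. Hence permuting or ablating positions demonstrably changes the output, establishing that the model is genuinely sensitive to positional features and that a faithful explanation \emph{must} localize relevance accordingly.

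Next I would compute, in closed form, the relevance that AttnLRP distributes in this setting. Since AttnLRP propagates only through $\tilde{Q}$, $\tilde{K}$, and $V$, treating $R_{i,k}$ as a fixed constant multiplier, all relevance reaching the input space is carried by the semantic embeddings $E_i$. By the permutation symmetry of the construction (identical $E_i$), this relevance is forced to be uniform across positions, producing a flat, position-agnostic heatmap while assigning \emph{zero} relevance to every positional component. I would then invoke a perturbation-based faithfulness criterion — the standard notion, consistent with the perturbation experiments used later in the paper, that a faithful attribution must rank features by their true effect on the output — and derive the contradiction: the output is provably altered by positional perturbations, yet standard LRP grants those features no relevance and cannot discriminate among positions, so its ranking is uninformative and its faithfulness is bounded away from the achievable optimum. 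For contrast, I would verify that PA-LRP routes the missing relevance through Eqs.~\ref{eq:PA-LRP-matmul}--\ref{eq:aggrPositive} onto $R_{i,k}$, recovering the position-dependent signal and thus closing the gap.

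The main obstacle I anticipate is formalizing ``low faithfulness'' quantitatively rather than merely qualitatively: I must pin down a concrete faithfulness functional (for instance, the correlation between assigned relevance and output sensitivity under ablation, or an area-under-the-perturbation-curve) and exhibit an explicit separation between what standard LRP attains and what is attainable. The delicate part is making the closed-form AttnLRP relevance computation \emph{exact} in the presence of the softmax and the $\epsilon$-stabilizer, so the claim of vanishing positional relevance is rigorous and not an artifact of linearization. Extending the symmetry/zero-relevance phenomenon from the minimal instance to generic multi-layer, multi-head stacks would require an additional propagation argument, but for the lemma a single clean witness of failure suffices.
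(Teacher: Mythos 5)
Your overall strategy---a minimal attention-level-PE model whose prediction is governed purely by position, on which PE-unaware LRP provably yields a degenerate attribution, contrasted with PA-LRP which recovers the signal---is exactly the strategy of the paper's proof. However, your concrete witness has a genuine gap: the symmetry step fails. With RoPE and identical embeddings $E_i = E$, the construction is \emph{not} permutation symmetric, because the rotation matrices $R_{i,k}$ differ per position; consequently the attention weights $A_{ij}$ depend on $j-i$, and AttnLRP's matrix-multiplication rule distributes the relevance of $O = AV$ to each $V_j$ (and, through the softmax, to $\tilde{Q}_i, \tilde{K}_j$ and hence back to the shared $E$) in proportions weighted by these position-dependent scores. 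Nothing forces the resulting input heatmap to be flat. What symmetry actually buys you is only that the \emph{positional} components receive zero relevance---which by itself does not yield the ``uninformative ranking'' contradiction your argument needs, since the semantic heatmap may still correlate with positional importance. You also correctly flag the remaining hard step yourself: making any exact claim about the AttnLRP flow through the softmax and the $\epsilon$-stabilizer is delicate, and your plan leaves it open.

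The paper's proof is engineered precisely to eliminate both difficulties, and it does so with ALiBi rather than RoPE. It takes a two-token autoregressive model with a single causal attention layer, ALiBi bias, and an \emph{affine} value projection, and observes that an optimal solution to a purely positional task sets $W_Q = W_K = W_V = 0$ with value bias $b \neq 0$: the attention matrix is then fixed by the ALiBi term alone, the output is nonzero and purely positional, and the relevance computation becomes exact with no softmax or linearization subtleties. The $\epsilon$-LRP rule for affine maps ignores the bias and weights relevance by $W_V = 0$, so the value branch propagates exactly zero relevance to the input; in the logit branch the positional term is a constant, so all relevance flows to the queries and keys, whose zero weights again transmit nothing. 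The attribution map is therefore identically zero---an unambiguous faithfulness failure requiring no quantitative faithfulness functional and no symmetry argument. Note that this trick is unavailable in your RoPE setting: RoPE acts multiplicatively on $Q$ and $K$, so zeroing $W_Q, W_K$ destroys the positional dependence of the logits, whereas ALiBi's additive bias survives it. To repair your proposal you would either have to carry out the closed-form AttnLRP computation you acknowledge is hard, or switch to an additive attention-level PE as the paper does.
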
 %
The proofs and examples are detailed in Appendix~\ref{app:Proofs}.
\vspace{-4pt}
\section{Experiments}
\vspace{-4pt}
To assess the effectiveness of our PA-LRP rules, we conduct a comprehensive set 
of experiments in both the NLP and vision domains. First, in Section~\ref{sec:resultsNLP}, we perform perturbation tests with both zero-shot foundation models and finetuned classifiers, as well as performing an ablation study. Next, in Section~\ref{sec:resultsVision}, we conduct perturbation and segmentation tests in the Vision domain using DeiT~\citep{touvron2021training}.


We begin by describing our baselines, ablation variant, and evaluation metrics:

\noindent\textbf{Baseline and Ablation Variant.}\quad %
Our primary baseline for comparison is AttnLRP~\citep{achtibat2024attnlrp}, as it represents the SoTA in general transformer XAI, and our method builds on top of it for non-positional components. The key distinction between our approach and this baseline (as well as other LRP-based methods) is our ability to attribute relevance to positional information. Our composite approach that balances both positional and non-positional relevance is denoted as PA-LRP, or 'ours'. Additionally, to isolate the effect of the positional encoding, we introduce an ablation variant denoted by 'PE Only', which directly measures the relevance assigned to positional components at the input space using our custom attribution rules.

Although empirical evaluation of attribution methods is inherently challenging, we validate our PA-LRP method using perturbation and segmentation tests. Below, we describe these metrics:

\noindent\textbf{Perturbation Tests.}\quad %
 Perturbation tests are split into two metrics: positive and negative perturbations, which differ in the order in which pixels or tokens are masked. In positive perturbation, pixels or tokens are masked in descending order of relevance. An effective explanation method identifies the most influential regions, leading to a noticeable drop in the model’s score (measured in comparison to the predicted or target class) as these critical areas are gradually removed. In negative perturbation, masking begins with the least relevant elements and progresses toward the more important ones. A reliable explanation should keep the model’s prediction stable, demonstrating robustness even when unimportant components are masked.

 Following~\citep{ali2022xai,zimerman2025explaining}, in both Vision and image domains, the final metric is quantified using the Area-Under-Curve (AUC), capturing model accuracy relative to the percentage of masked pixels or tokens, from 10\% to 90\%.

\noindent\textbf{Segmentation Tests.}\quad%
For attribution methods in vision, segmentation tests are a set of evaluations used to assess the quality of a model’s ability to distinguish foreground from background in an image. %
These tests compare the labeled segmentation image, which indicates whether each pixel belongs to the background or the foreground, with the explainability map, after it has been binarized using a thresholding technique. Then, several metrics are computed over both images: (i) Pixel Accuracy: The percentage of correctly classified pixels, measuring how well the predicted segmentation aligns with the ground truth. (ii) Mean Intersection-over-Union (mIoU): The ratio of the intersection to the union of the predicted and ground-truth segmentation maps, averaged across all images. (iii) Mean Average Precision (mAP): A metric that considers precision and recall trade-offs at different thresholds, providing a robust assessment of segmentation quality.

\vspace{-4pt}
\subsection{Results in NLP\label{sec:resultsNLP}}
\vspace{-3pt}
For experiments in the NLP
, we first present results for perturbation tests, including an ablation study. For our tests, we adopt settings defined in~\citep{ali2022video,zimerman2025explaining} and we present qualitative results in Appendix.~\ref{sec:additonalNLPExamples}.

\myparagraph{Perturbation Tests for Finetuned Models.} %
We conduct perturbation tests on three LLMs, finetuned on the IMDB classification dataset: LLaMa 2-7B~\citep{touvron2023llama}, LLaMa 2-7B Quantized, and Tiny-LLaMa~\citep{zhang2024tinyllama}.
The results presented in Table~\ref{tab:pertubatoinNLp} demonstrate that our method achieves better scores than the LRP baseline across all metrics 
. In particular, our approach improves the AU-MSE score in the generation scenario by 14.5\% for LLaMa 2-7B, 10.6\% for LLaMa 2-7B Quantized, and 51.41\% for Tiny-LLaMa.

\begin{table}[t]\centering 
\small
\setlength{\tabcolsep}{3pt}
\caption{\small \textbf{Perturbation Tests in NLP.} Evaluation of LLaMa-2 7B, Quantized LLaMa-2 7B, and Tiny-LLaMa, all finetuned on IMDB, on pruning and generation perturbation tasks. AttnLRP~\citep{achtibat2024attnlrp} is the  LRP baseline. The metrics used are AUAC (area under activation curve, higher is better) and AU-MSE (area under the MSE, lower is better).\label{tab:pertubatoinNLp}}
\resizebox{0.66\textwidth}{!}{
\begin{tabular}{c c c c c c }
\toprule
\multicolumn{1}{c}{Model}  & \multicolumn{1}{c}{Method}  & \multicolumn{2}{c}{Generation} & \multicolumn{2}{c}{Pruning}  \\ 
& & \multicolumn{1}{c}{AUAC $\uparrow$} & \multicolumn{1}{c}{AU-MSE $\downarrow$} & \multicolumn{1}{c}{AUAC $\uparrow$} & \multicolumn{1}{c}{AU-MSE $\downarrow$}\\
 \midrule
%
LLaMa-2 7B & AttnLRP& 0.779& 7.629& 0.777& 6.548\\ 
LLaMa-2 7B & PE Only& 0.771& 6.792& 0.771& 6.823\\
LLaMa-2 7B & Ours& \textbf{0.796}& \textbf{6.521}& \textbf{0.790}& \textbf{6.325}\\
\midrule
LLaMa-2 7B  Quantized & AttnLRP& 0.774& 11.348&  0.767& 10.067\\ 
LLaMa-2 7B Quantized & PE Only& 0.758&  10.730& 0.758& 10.774\\
LLaMa-2 7B Quantized & Ours& \textbf{0.785}& \textbf{10.137}& \textbf{0.778}& \textbf{9.685}\\
%
\midrule
Tiny-LLaMa-2 7B & AttnLRP& 0.803& 8.065& 0.792& 4.030\\ 
Tiny-LLaMa-2 7B & PE Only& 0.788& \textbf{3.918}& 0.788& \textbf{3.947}\\
Tiny-LLaMa-2 7B & Ours& \textbf{0.806}& {4.915}& \textbf{0.805}& 4.082\\ 
\bottomrule\\
\end{tabular}
}
\vspace{-11pt}

\end{table}

\myparagraph{Perturbation Tests in Zero-Shot Settings.} 
We use LLaMa 3-8B ~\citep{grattafiori2024llama} to evaluate explainability performance in zero-shot setting.
The results presented in Table~\ref{tab:perturbationZeroShot}  showcase the superiority of our method across all metrics. \textbf{(i) Multiple-Choice Question Answering (MCQA):} our approach improves, on both generation and pruning scenarios, the AUAC score by approximately 3.2\%, and AU-MSE score by approximately by 7.7\%. 
\textbf{(ii) Next Token Prediction:} our approach improves the AUAC score by approximately 0.5\% on both generation and pruning scenarios, and AU-MSE score by approximately by 3\% on both scenarios. In contrast to MCQA, the Wikipedia dataset consists relatively long texts, making shifts in relevancy distributions less critical to the model's prediction.

\begin{wraptable}{r}{0.50\textwidth} 
\vspace{-9pt}
\small
\setlength{\tabcolsep}{3pt}
 \caption{\small \textbf{Ablation Study:} Analyzing the contribution of the multi-sink mechanism via perturbation tests in NLP. The evaluation was conducted on 
 the IMDB dataset.\label{tab:pertubatoinAblationNLp}}
 \vspace{-3pt}
 \resizebox{0.5\textwidth}{!}{
\begin{tabular}{c c c c c }
\toprule
\multicolumn{1}{c}{Method}  & \multicolumn{2}{c}{Generation} & \multicolumn{2}{c}{Pruning}  \\ 
& \multicolumn{1}{c}{AUAC $\uparrow$} & \multicolumn{1}{c}{AU-MSE $\downarrow$} & \multicolumn{1}{c}{AUAC $\uparrow$} & \multicolumn{1}{c}{AU-MSE $\downarrow$}\\
 \midrule
Ours& \textbf{0.796}& \textbf{6.521}& \textbf{0.790}& \textbf{6.325} \\
 w/o Multi-Sink &  0.759 & 7.124 & 0.758 & 7.158
 \\ 
\bottomrule\\
\end{tabular}
}
\vspace{-17pt}
\end{wraptable}
\myparagraph{Ablation.} 
To better understand the contribution of our PA-LRP rules, we conduct perturbation tests for the method that attributes solely position-associated relevance. The results are presented in the second, fourth, and sixth rows of Table~\ref{tab:pertubatoinNLp}, and second row of Table~\ref{tab:perturbationZeroShot}, and are denoted by 'PE-Only.' Surprisingly, this method produces results similar to the AttnLRP baseline, demonstrating the importance of PE-associated relevance, which carries a significant part of the signal. Notably, in Table ~\ref{tab:pertubatoinNLp}, this variant achieves the best score on the AU-MSE metric for Tiny-LLaMA, reducing the error by 50\% compared to AttnLRP~\citep{achtibat2024attnlrp}. Moreover, in Table~\ref{tab:pertubatoinAblationNLp}, we ablate the contribution of our multi-sink approach, which relies on drawing solely positive contributions across layers, as we aim to prevent the loss of positional relevance due to influence of negative contributions in final layers. We evaluate explainability performance for binary classification 
of LLaMa-2-7B, using the same perturbation metrics, and report that the multi-sink approach improves the results by 7\%.

\begin{table}[t]\centering 
\small
\setlength{\tabcolsep}{4pt}
\caption{\small \textbf{Perturbation Tests in NLP (Zero-Shot).} Evaluation of LLaMa-3 8B in zero-shot on generation and pruning perturbation tasks for both multiple-choice question answering and Next-Token Prediction (NTP) settings. Metrics reported are AUAC (area under activation curve, higher is better) and AU-MSE (area under MSE, lower is better). ``AttnLRP'' refers to the LRP baseline~\citep{achtibat2024attnlrp}. 'G' for generation and 'P' for pruning.} 
\resizebox{\textwidth}{!}{
\begin{tabular}{c 
c c c c 
c c c c}
\toprule
\multicolumn{1}{c}{} &
\multicolumn{4}{c}{Multiple-Choice Question Answering} &
\multicolumn{4}{c}{Next Token Prediction} \\
\cmidrule(lr){2-5} \cmidrule(lr){6-9}
Method 
& G. AUAC $\uparrow$ & G. AU-MSE $\downarrow$ & P. AUAC $\uparrow$ & P. AU-MSE $\downarrow$
& G. AUAC $\uparrow$ & G. AU-MSE $\downarrow$ & P. AUAC $\uparrow$ & P. AU-MSE $\downarrow$ \\
\midrule

AttnLRP 
& 0.365 & 66.399 & 0.354 & 68.856 
& 0.559 & 41.704 & 0.559 & 42.003 \\

PE Only 
& 0.374 & \textbf{61.014} & 0.364 & \textbf{63.141} 
&0.557 & 40.538 & 0.556 & 40.800 \\

Ours & \textbf{0.377} & 61.285 & \textbf{0.368} & 63.424 
& \textbf{0.562} & \textbf{40.474} & \textbf{0.561} & \textbf{40.735} \\
\bottomrule\\
\end{tabular}
}
\label{tab:perturbationZeroShot}
\vspace{-12pt}
\end{table}

\vspace{-3pt}
\subsection{Results for  Vision Transformers\label{sec:resultsVision}}

\begin{wrapfigure}{r}{0.48\textwidth}
\vspace{-45pt}
\centering
\begin{tabular}{@{\hskip 0.03in}c@{\hskip 0.03in}c@{\hskip 0.03in}c@{\hskip 0.03in}c@{\hskip 0.03in}} 
 \includegraphics[width=0.12\textwidth]{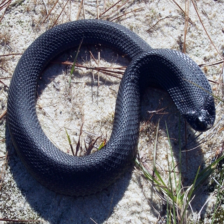} & 
    \includegraphics[width=0.12\textwidth]{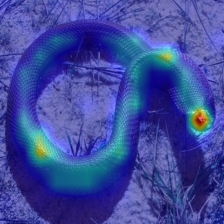} &
    \includegraphics[width=0.12\textwidth]{figures/vision/img1/196_basic_custom_lrp_PE_ONLY.png} & \includegraphics[width=0.12\textwidth]{figures/vision/img1/196_basic_custom_lrp_SEMANTIC_ONLY.png}
\\
  \includegraphics[width=0.12\textwidth]{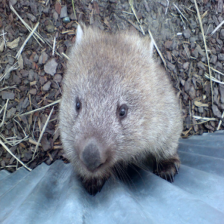} & \includegraphics[width=0.12\textwidth]{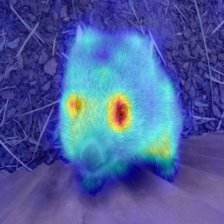} &
  \includegraphics[width=0.12\textwidth]{figures/vision/img5/165_basic_custom_lrp_PE_ONLY.png} &
\includegraphics[width=0.12\textwidth]{figures/vision/img5/165_basic_custom_lrp_SEMANTIC_ONLY.png}
 \\

    \includegraphics[width=0.12\textwidth]{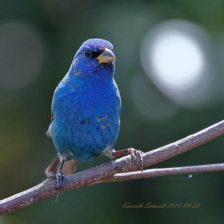} & 
    \includegraphics[width=0.12\textwidth]{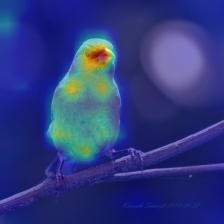} &
    \includegraphics[width=0.12\textwidth]{figures/vision/img17/img17_pos.png} & \includegraphics[width=0.12\textwidth]{figures/vision/img17/img17_sem.png} \\

  \includegraphics[width=0.12\textwidth]{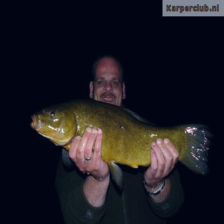} & \includegraphics[width=0.12\textwidth]{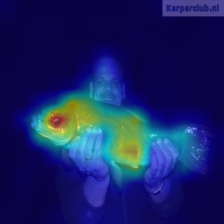} &
  \includegraphics[width=0.12\textwidth]{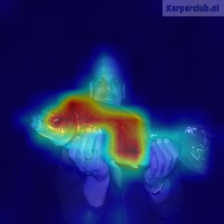} &
\includegraphics[width=0.12\textwidth]{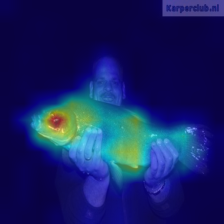}
 \\
 \includegraphics[width=0.12\textwidth]{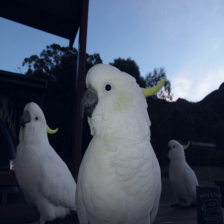} & \includegraphics[width=0.12\textwidth]{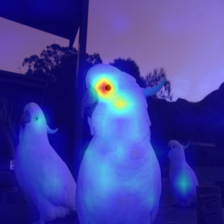} &
  \includegraphics[width=0.12\textwidth]{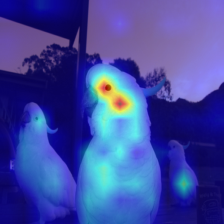} &
\includegraphics[width=0.12\textwidth]{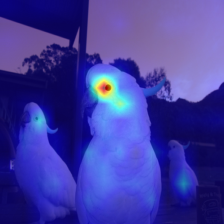}
 \\
 \includegraphics[width=0.12\textwidth]{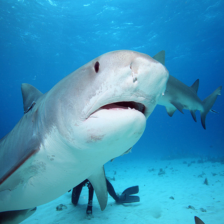} & \includegraphics[width=0.12\textwidth]{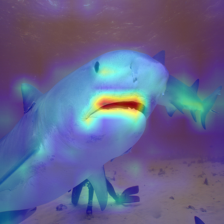} &
  \includegraphics[width=0.12\textwidth]{figures/vision/img6/157_basic_custom_lrp_PE_ONLY.png} &
\includegraphics[width=0.12\textwidth]
{figures/vision/img6/157_basic_custom_lrp_SEMANTIC_ONLY.png} \\
  \includegraphics[width=0.12\textwidth]{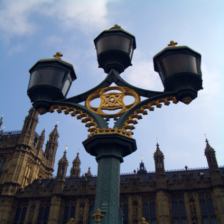} & \includegraphics[width=0.12\textwidth]{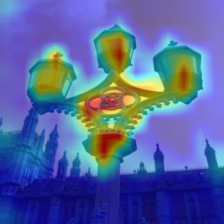} &
  \includegraphics[width=0.12\textwidth]{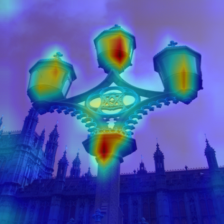} &
\includegraphics[width=0.12\textwidth]{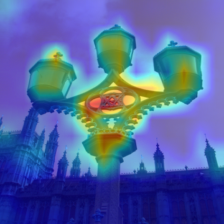}\\
\includegraphics[width=0.12\textwidth]{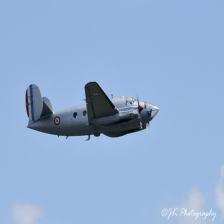} & 
\includegraphics[width=0.12\textwidth]{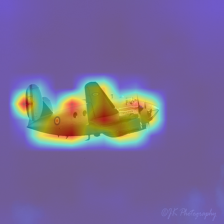} &
\includegraphics[width=0.12\textwidth]{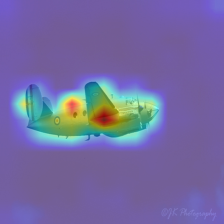} & \includegraphics[width=0.12\textwidth]{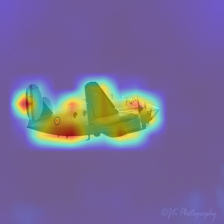} \\(a) & (b) & (c) & (d)
 \end{tabular}
 \vspace{-6pt}
 \caption{\small Results of different explanation methods for DeiT. (a) Input image. (b) PA-LRP (ours), which includes PE attribution. (c) PE only LRP, (d) AttnLRP~\citep{achtibat2024attnlrp}, which does not attribute relevancy to PE.}\label{fig:qualitative_cmp_vision}
 \vspace{-44pt}
\end{wrapfigure}

For vision models, we present both quantitative and qualitative analysis. 

\myparagraph{Qualitative Analysis.}
For qualitative analysis, we visualize the explainability maps obtained from our method, the AttnLRP~\citep{achtibat2024attnlrp} baseline, and the ablation variant that focuses exclusively on PE-associated relevance, denoted by PE Only. Additional examples with larger images are presented in appendix~\ref{sec:additonalExamples}.

Figure~\ref{fig:qualitative_cmp_vision} presents a comparative visualization of these maps. The results reveal three notable trends. \noindent\textbf{(i) Effectiveness of PE-associated relevance:} The maps from the ablation variant perform at the same level as the AttnLRP baseline. This finding highlights the strength of our method in identifying important signals that previous works have overlooked, underscoring the importance of our PA-LRP rules. \noindent\textbf{(ii) The uniqueness of PE-associated relevance:} The attributed signal derived solely from positional-associated relevance 
captures unique relationships, exhibiting clearer spatial and structural patterns. In particular, relevance is distributed across the entire object, especially in the snake, bird, and shark examples. In contrast, the baseline method, which does not propagate relevance through PEs, produces a sparser pattern that does not focus on the entire object but instead is highly selective to specific patches. One possible explanation is that positional-associated relevance better captures concepts related to position, structure, order, and broader regions within the image. \noindent\textbf{(iii) The importance of balancing:} It is evident that the maps obtained from the PE-associated method and the baseline are complementary, and their combination, extracted via our approach, provides the most robust explanations.

\myparagraph{Quantitative Analysis.}
Here, we present our quantitative results through perturbation and segmentation tests.

\noindent\textbf{Perturbation Tests in Vision.}\quad %
The results for perturbation tests are shown in Table~\ref{tab:visionPertubatoin}, where we compare our method against the attention LRP baseline and an ablation variant that focuses solely on maps obtained for positionally associated relevance (PE only). Experiments are conducted using three model sizes: Tiny, Small, and Base.

\begin{table}[h]
\centering
\setlength{\tabcolsep}{3pt}
\begin{minipage}[t]{0.435\textwidth}
\centering
\caption{\small Perturbation Tests for DeiT Variants on ImageNet. AUC results for predicted class. Higher (lower) is better for negative (positive).}
\small
\begin{tabular}{c c c c c c }
\toprule
\multicolumn{1}{c}{M. Size}& \multicolumn{1}{c}{Method}& \multicolumn{2}{c}{Negative $\uparrow$ } & \multicolumn{2}{c}{Positive $\downarrow$} \\ 
 & & Predicted & Target & Predicted & Target    \\ 
\midrule
Base & AttnLRP & 52.185 & 47.516 & 10.784 & \textbf{8.032} \\ 
Base & Ours & \textbf{54.970} & \textbf{50.174} & \textbf{9.918} & 9.237 \\
\midrule
Small & AttnLRP & 50.662 & 45.105 & 10.511 & 9.761 \\ 
Small & Ours & \textbf{53.482} & \textbf{47.948} & \textbf{9.135} & \textbf{8.477} \\
\midrule
Tiny & AttnLRP & 43.832 & 37.499 & \textbf{2.796} & \textbf{2.503} \\ 
Tiny & Ours & \textbf{50.126} & \textbf{42.567} & 3.579 & 3.214 \\
\bottomrule \\
\end{tabular}

\label{tab:visionPertubatoin}
\end{minipage}
\hfill
\begin{minipage}[t]{0.46\textwidth}
\centering
\small
\caption{\small Segmentation performance of DeiT variants on ImageNet segmentation~\citep{guillaumin2014imagenet}. 'M.' for model. Higher is better. }
\begin{tabular}{c c c c }
\toprule
M. Size & Method & Pixel Accuracy $\uparrow$ & mIoU $\uparrow$\\ 
\midrule
Base & AttnLRP & 72.204 & 50.100 \\ 
Base & Ours & \textbf{72.698} & \textbf{51.400} \\ 
\midrule
Small & AttnLRP & 72.114 & 50.000 \\ 
Small & Ours & \textbf{73.060} & \textbf{51.700} \\ 
\midrule
Tiny & AttnLRP & 74.815 & 52.850 \\ 
Tiny & Ours & \textbf{76.613} & \textbf{55.920} \\
\bottomrule \\
\end{tabular}

\label{tab:visionSegmentation}
\end{minipage}
\vspace{-5pt}
\end{table}

Notably, our method outperforms the baseline by a significant margin. For instance, in negative perturbation of the predicted class, our method improves the performance by an average of 3.97 points across the three model sizes. However, in positive perturbation, our method lags behind the baseline in half of the cases, though by a small margin of at most 1.2 points.

\vspace{-5pt}
\section{Discussion: The Role of Attributing PEs}
\vspace{-5pt}
Our {theoretical and} empirical analysis suggests that both semantic and positional relevance are complementary, and combining them is essential to provide precise explanations. LRP-type attribution creates pixel-level heatmaps, but can we characterize and identify which {\em concepts} are attributed mainly by positional relevance versus semantic relevance?

We may expect, for example, that objects that are usually placed in specific contexts (boats on water, airplanes in the sky) would display a more significant PE component. 
Much of this position context is relative. RoPE, for example, captures relative position through the matrix multiplication of two position-dependent rotation matrices, which plays a fundamental role in capturing spatial features in vision tasks (e.g., objects spanning across multiple patches) and when modeling relationships between words in the same sentence in NLP. In such cases, our PA-LRP rules can effectively attribute positional features, that are largely ignored by standard LRP methods. 


\vspace{-6pt}
\section{Conclusions}\label{sec:conclusion}
\vspace{-6pt}
This paper explores the importance of assigning LRP scores to positional information, a crucial component of Transformers and LLMs. Our theoretical and empirical analysis demonstrates that positional-associated relevance carries a unique type of significance and can drastically improve XAI methods for attention models. %

Regarding limitations, we emphasize that our work focuses on designing new custom LRP rules to propagate relevance through PEs, leveraging the insight that this aspect has been previously overlooked. However, we do not extend this insight to redesign or systematically revisit existing LRP rules. Such a redesign could offer an opportunity to empirically and theoretically establish improved LRP rules for attention mechanisms and Transformer models.

\section{Acknowledgments}
This work was supported by a grant from the Tel Aviv University Center for AI and Data Science (TAD). This research was also supported by the Ministry of Innovation, Science \& Technology ,Israel (1001576154) and the Michael J. Fox
Foundation (MJFF-022407).

\newpage

\bibliographystyle{plainnat}
\bibliography{references}

\begin{thebibliography}{52}
\providecommand{\natexlab}[1]{#1}
\providecommand{\url}[1]{\texttt{#1}}
\expandafter\ifx\csname urlstyle\endcsname\relax
  \providecommand{\doi}[1]{doi: #1}\else
  \providecommand{\doi}{doi: \begingroup \urlstyle{rm}\Url}\fi

\bibitem[Abnar and Zuidema(2020)]{abnar2020quantifying}
Samira Abnar and Willem Zuidema.
\newblock Quantifying attention flow in transformers.
\newblock In \emph{Proceedings of the 58th Annual Meeting of the Association for Computational Linguistics}. Association for Computational Linguistics, 2020.

\bibitem[Achtibat et~al.(2024)Achtibat, Hatefi, Dreyer, Jain, Wiegand, Lapuschkin, and Samek]{achtibat2024attnlrp}
Reduan Achtibat, Sayed Mohammad~Vakilzadeh Hatefi, Maximilian Dreyer, Aakriti Jain, Thomas Wiegand, Sebastian Lapuschkin, and Wojciech Samek.
\newblock Attnlrp: attention-aware layer-wise relevance propagation for transformers.
\newblock In \emph{Proceedings of the 41st International Conference on Machine Learning}, pages 135--168, 2024.

\bibitem[Ali et~al.(2022{\natexlab{a}})Ali, Schnake, Eberle, Montavon, M{\"u}ller, and Wolf]{ali2022xai}
Ameen Ali, Thomas Schnake, Oliver Eberle, Gr{\'e}goire Montavon, Klaus-Robert M{\"u}ller, and Lior Wolf.
\newblock Xai for transformers: Better explanations through conservative propagation.
\newblock In \emph{International conference on machine learning}, pages 435--451. PMLR, 2022{\natexlab{a}}.

\bibitem[Ali et~al.(2022{\natexlab{b}})Ali, Schwartz, Hazan, and Wolf]{ali2022video}
Ameen Ali, Idan Schwartz, Tamir Hazan, and Lior Wolf.
\newblock Video and text matching with conditioned embeddings.
\newblock In \emph{Proceedings of the IEEE/CVF winter conference on applications of computer vision}, pages 1565--1574, 2022{\natexlab{b}}.

\bibitem[Ali et~al.(2024)Ali, Zimerman, and Wolf]{ali2024hidden}
Ameen Ali, Itamar Zimerman, and Lior Wolf.
\newblock The hidden attention of mamba models.
\newblock \emph{arXiv preprint arXiv:2403.01590}, 2024.

\bibitem[Arras et~al.(2017)Arras, Montavon, M{\"u}ller, and Samek]{arras-etal-2017-explaining}
Leila Arras, Gr{\'e}goire Montavon, Klaus-Robert M{\"u}ller, and Wojciech Samek.
\newblock Explaining recurrent neural network predictions in sentiment analysis.
\newblock In \emph{Proceedings of the 8th Workshop on Computational Approaches to Subjectivity, Sentiment and Social Media Analysis}, pages 159--168, 2017.

\bibitem[Arras et~al.(2019)Arras, Arjona-Medina, Widrich, Montavon, Gillhofer, M{\"u}ller, Hochreiter, and Samek]{arras2019explaining}
Leila Arras, Jos{\'e} Arjona-Medina, Michael Widrich, Gr{\'e}goire Montavon, Michael Gillhofer, Klaus-Robert M{\"u}ller, Sepp Hochreiter, and Wojciech Samek.
\newblock Explaining and interpreting lstms.
\newblock \emph{Explainable ai: Interpreting, explaining and visualizing deep learning}, pages 211--238, 2019.

\bibitem[Arrieta et~al.(2020)Arrieta, D{\'\i}az-Rodr{\'\i}guez, Del~Ser, Bennetot, Tabik, Barbado, Garc{\'\i}a, Gil-L{\'o}pez, Molina, Benjamins, et~al.]{arrieta2020explainable}
Alejandro~Barredo Arrieta, Natalia D{\'\i}az-Rodr{\'\i}guez, Javier Del~Ser, Adrien Bennetot, Siham Tabik, Alberto Barbado, Salvador Garc{\'\i}a, Sergio Gil-L{\'o}pez, Daniel Molina, Richard Benjamins, et~al.
\newblock Explainable artificial intelligence (xai): Concepts, taxonomies, opportunities and challenges toward responsible ai.
\newblock \emph{Information fusion}, 58:\penalty0 82--115, 2020.

\bibitem[Bach et~al.(2015)Bach, Binder, Montavon, Klauschen, M{\"u}ller, and Samek]{bach2015pixel}
Sebastian Bach, Alexander Binder, Gr{\'e}goire Montavon, Frederick Klauschen, Klaus-Robert M{\"u}ller, and Wojciech Samek.
\newblock On pixel-wise explanations for non-linear classifier decisions by layer-wise relevance propagation.
\newblock \emph{PloS one}, 10\penalty0 (7):\penalty0 e0130140, 2015.

\bibitem[Baehrens et~al.(2010)Baehrens, Schroeter, Harmeling, Kawanabe, Hansen, and M{\"u}ller]{baehrens2010explain}
D~Baehrens, T~Schroeter, S~Harmeling, M~Kawanabe, K~Hansen, and K-R M{\"u}ller.
\newblock How to explain individual classification decisions.
\newblock \emph{Journal of Machine Learning Research}, 2010.

\bibitem[Bai et~al.(2023)Bai, Bai, Chu, Cui, Dang, Deng, Fan, Ge, Han, Huang, et~al.]{bai2023qwen}
Jinze Bai, Shuai Bai, Yunfei Chu, Zeyu Cui, Kai Dang, Xiaodong Deng, Yang Fan, Wenbin Ge, Yu~Han, Fei Huang, et~al.
\newblock Qwen technical report.
\newblock \emph{arXiv preprint arXiv:2309.16609}, 2023.

\bibitem[Balduzzi et~al.(2017)Balduzzi, Frean, Leary, Lewis, Ma, and McWilliams]{balduzzi2017shattered}
David Balduzzi, Marcus Frean, Lennox Leary, JP~Lewis, Kurt Wan-Duo Ma, and Brian McWilliams.
\newblock The shattered gradients problem: If resnets are the answer, then what is the question?
\newblock In \emph{International conference on machine learning}, pages 342--350. PMLR, 2017.

\bibitem[Biderman et~al.(2023)Biderman, Schoelkopf, Anthony, Bradley, O’Brien, Hallahan, Khan, Purohit, Prashanth, Raff, et~al.]{biderman2023pythia}
Stella Biderman, Hailey Schoelkopf, Quentin~Gregory Anthony, Herbie Bradley, Kyle O’Brien, Eric Hallahan, Mohammad~Aflah Khan, Shivanshu Purohit, USVSN~Sai Prashanth, Edward Raff, et~al.
\newblock Pythia: A suite for analyzing large language models across training and scaling.
\newblock In \emph{International Conference on Machine Learning}, pages 2397--2430. PMLR, 2023.

\bibitem[Caron et~al.(2021)Caron, Touvron, Misra, J{\'e}gou, Mairal, Bojanowski, and Joulin]{caron2021emerging}
Mathilde Caron, Hugo Touvron, Ishan Misra, Herv{\'e} J{\'e}gou, Julien Mairal, Piotr Bojanowski, and Armand Joulin.
\newblock Emerging properties in self-supervised vision transformers.
\newblock In \emph{Proceedings of the IEEE/CVF international conference on computer vision}, pages 9650--9660, 2021.

\bibitem[Chefer et~al.(2021{\natexlab{a}})Chefer, Gur, and Wolf]{chefer2021generic}
Hila Chefer, Shir Gur, and Lior Wolf.
\newblock Generic attention-model explainability for interpreting bi-modal and encoder-decoder transformers.
\newblock In \emph{Proceedings of the IEEE/CVF international conference on computer vision}, pages 397--406, 2021{\natexlab{a}}.

\bibitem[Chefer et~al.(2021{\natexlab{b}})Chefer, Gur, and Wolf]{chefer2021transformer}
Hila Chefer, Shir Gur, and Lior Wolf.
\newblock Transformer interpretability beyond attention visualization.
\newblock In \emph{Proceedings of the IEEE/CVF conference on computer vision and pattern recognition}, pages 782--791, 2021{\natexlab{b}}.

\bibitem[Cheng et~al.(2025)Cheng, Xing, Xue, and Yang]{cheng2025unifying}
Qisen Cheng, Jinming Xing, Chang Xue, and Xiaoran Yang.
\newblock Unifying prediction and explanation in time-series transformers via shapley-based pretraining.
\newblock \emph{arXiv preprint arXiv:2501.15070}, 2025.

\bibitem[Clark et~al.(2019)Clark, Khandelwal, Levy, and Manning]{clark2019does}
Kevin Clark, Urvashi Khandelwal, Omer Levy, and Christopher~D Manning.
\newblock What does bert look at? an analysis of bert’s attention.
\newblock In \emph{Proceedings of the 2019 ACL Workshop BlackboxNLP: Analyzing and Interpreting Neural Networks for NLP}, page 276. Association for Computational Linguistics, 2019.

\bibitem[Das and Rad(2020)]{das2020opportunities}
Arun Das and Paul Rad.
\newblock Opportunities and challenges in explainable artificial intelligence (xai): A survey.
\newblock \emph{arXiv preprint arXiv:2006.11371}, 2020.

\bibitem[Dufter et~al.(2022)Dufter, Schmitt, and Sch{\"u}tze]{dufter2022position}
Philipp Dufter, Martin Schmitt, and Hinrich Sch{\"u}tze.
\newblock Position information in transformers: An overview.
\newblock \emph{Computational Linguistics}, 48\penalty0 (3):\penalty0 733--763, 2022.

\bibitem[Grattafiori et~al.(2024)Grattafiori, Dubey, Jauhri, Pandey, Kadian, Al-Dahle, Letman, Mathur, Schelten, Vaughan, et~al.]{grattafiori2024llama}
Aaron Grattafiori, Abhimanyu Dubey, Abhinav Jauhri, Abhinav Pandey, Abhishek Kadian, Ahmad Al-Dahle, Aiesha Letman, Akhil Mathur, Alan Schelten, Alex Vaughan, et~al.
\newblock The llama 3 herd of models.
\newblock \emph{arXiv preprint arXiv:2407.21783}, 2024.

\bibitem[Guillaumin et~al.(2014)Guillaumin, K{\"u}ttel, and Ferrari]{guillaumin2014imagenet}
Matthieu Guillaumin, Daniel K{\"u}ttel, and Vittorio Ferrari.
\newblock Imagenet auto-annotation with segmentation propagation.
\newblock \emph{International Journal of Computer Vision}, 110:\penalty0 328 -- 348, 2014.
\newblock URL \url{https://api.semanticscholar.org/CorpusID:1005559}.

\bibitem[Hochreiter and Schmidhuber(1997)]{hochreiter1997long}
Sepp Hochreiter and J{\"u}rgen Schmidhuber.
\newblock Long short-term memory.
\newblock \emph{Neural computation}, 9\penalty0 (8):\penalty0 1735--1780, 1997.

\bibitem[Ibrahim and Shafiq(2023)]{ibrahim2023explainable}
Rami Ibrahim and M~Omair Shafiq.
\newblock Explainable convolutional neural networks: a taxonomy, review, and future directions.
\newblock \emph{ACM Computing Surveys}, 55\penalty0 (10):\penalty0 1--37, 2023.

\bibitem[Jafari et~al.(2024)Jafari, Montavon, Müller, and Eberle]{jafari2024mambalrp}
Farnoush~Rezaei Jafari, Grégoire Montavon, Klaus-Robert Müller, and Oliver Eberle.
\newblock Mambalrp: Explaining selective state space sequence models.
\newblock In \emph{The Thirty-eighth Annual Conference on Neural Information Processing Systems}, 2024.

\bibitem[Jain and Wallace(2019)]{jain2019attention}
Sarthak Jain and Byron~C Wallace.
\newblock Attention is not explanation.
\newblock In \emph{Proceedings of the 2019 Conference of the North American Chapter of the Association for Computational Linguistics: Human Language Technologies, Volume 1 (Long and Short Papers)}, pages 3543--3556, 2019.

\bibitem[Jordan(1997)]{jordan1997serial}
Michael~I Jordan.
\newblock Serial order: A parallel distributed processing approach.
\newblock In \emph{Advances in psychology}, volume 121, pages 471--495. Elsevier, 1997.

\bibitem[Kokalj et~al.(2021)Kokalj, {\v{S}}krlj, Lavra{\v{c}}, Pollak, and Robnik-{\v{S}}ikonja]{kokalj2021bert}
Enja Kokalj, Bla{\v{z}} {\v{S}}krlj, Nada Lavra{\v{c}}, Senja Pollak, and Marko Robnik-{\v{S}}ikonja.
\newblock Bert meets shapley: Extending shap explanations to transformer-based classifiers.
\newblock In \emph{Proceedings of the EACL hackashop on news media content analysis and automated report generation}, pages 16--21, 2021.

\bibitem[Lang et~al.(2021)Lang, Gandelsman, Yarom, Wald, Elidan, Hassidim, Freeman, Isola, Globerson, Irani, et~al.]{lang2021explaining}
Oran Lang, Yossi Gandelsman, Michal Yarom, Yoav Wald, Gal Elidan, Avinatan Hassidim, William~T Freeman, Phillip Isola, Amir Globerson, Michal Irani, et~al.
\newblock Explaining in style: training a gan to explain a classifier in stylespace.
\newblock In \emph{Proceedings of the IEEE/CVF International Conference on Computer Vision}, pages 693--702, 2021.

\bibitem[Linardatos et~al.(2020)Linardatos, Papastefanopoulos, and Kotsiantis]{linardatos2020explainable}
Pantelis Linardatos, Vasilis Papastefanopoulos, and Sotiris Kotsiantis.
\newblock Explainable ai: A review of machine learning interpretability methods.
\newblock \emph{Entropy}, 23\penalty0 (1):\penalty0 18, 2020.

\bibitem[Montavon et~al.(2017)Montavon, Lapuschkin, Binder, Samek, and M{\"u}ller]{montavon2017explaining}
Gr{\'e}goire Montavon, Sebastian Lapuschkin, Alexander Binder, Wojciech Samek, and Klaus-Robert M{\"u}ller.
\newblock Explaining nonlinear classification decisions with deep taylor decomposition.
\newblock \emph{Pattern recognition}, 65:\penalty0 211--222, 2017.

\bibitem[Montavon et~al.(2019)Montavon, Binder, Lapuschkin, Samek, and M{\"u}ller]{montavon2019layer}
Gr{\'e}goire Montavon, Alexander Binder, Sebastian Lapuschkin, Wojciech Samek, and Klaus-Robert M{\"u}ller.
\newblock Layer-wise relevance propagation: an overview.
\newblock \emph{Explainable AI: interpreting, explaining and visualizing deep learning}, pages 193--209, 2019.

\bibitem[Mosca et~al.(2022)Mosca, Szigeti, Tragianni, Gallagher, and Groh]{mosca2022shap}
Edoardo Mosca, Ferenc Szigeti, Stella Tragianni, Daniel Gallagher, and Georg Groh.
\newblock Shap-based explanation methods: a review for nlp interpretability.
\newblock In \emph{Proceedings of the 29th international conference on computational linguistics}, pages 4593--4603, 2022.

\bibitem[Press et~al.()Press, Smith, and Lewis]{presstrain}
Ofir Press, Noah Smith, and Mike Lewis.
\newblock Train short, test long: Attention with linear biases enables input length extrapolation.
\newblock In \emph{International Conference on Learning Representations}.

\bibitem[Raffel et~al.(2020)Raffel, Shazeer, Roberts, Lee, Narang, Matena, Zhou, Li, and Liu]{raffel2020exploring}
Colin Raffel, Noam Shazeer, Adam Roberts, Katherine Lee, Sharan Narang, Michael Matena, Yanqi Zhou, Wei Li, and Peter~J Liu.
\newblock Exploring the limits of transfer learning with a unified text-to-text transformer.
\newblock \emph{Journal of machine learning research}, 21\penalty0 (140):\penalty0 1--67, 2020.

\bibitem[Ravi et~al.(2024)Ravi, Gabeur, Hu, Hu, Ryali, Ma, Khedr, R{\"a}dle, Rolland, Gustafson, et~al.]{ravi2024sam}
Nikhila Ravi, Valentin Gabeur, Yuan-Ting Hu, Ronghang Hu, Chaitanya Ryali, Tengyu Ma, Haitham Khedr, Roman R{\"a}dle, Chloe Rolland, Laura Gustafson, et~al.
\newblock Sam 2: Segment anything in images and videos.
\newblock \emph{arXiv preprint arXiv:2408.00714}, 2024.

\bibitem[Samek et~al.(2017)Samek, Wiegand, and M{\"u}ller]{samek2017explainable}
Wojciech Samek, Thomas Wiegand, and Klaus-Robert M{\"u}ller.
\newblock Explainable artificial intelligence: Understanding, visualizing and interpreting deep learning models.
\newblock \emph{arXiv preprint arXiv:1708.08296}, 2017.

\bibitem[Selvaraju et~al.(2017)Selvaraju, Cogswell, Das, Vedantam, Parikh, and Batra]{selvaraju2017grad}
Ramprasaath~R Selvaraju, Michael Cogswell, Abhishek Das, Ramakrishna Vedantam, Devi Parikh, and Dhruv Batra.
\newblock Grad-cam: Visual explanations from deep networks via gradient-based localization.
\newblock In \emph{Proceedings of the IEEE international conference on computer vision}, pages 618--626, 2017.

\bibitem[Shaw et~al.(2018)Shaw, Uszkoreit, and Vaswani]{shaw2018self}
Peter Shaw, Jakob Uszkoreit, and Ashish Vaswani.
\newblock Self-attention with relative position representations.
\newblock \emph{arXiv preprint arXiv:1803.02155}, 2018.

\bibitem[Shrikumar et~al.(2017)Shrikumar, Greenside, and Kundaje]{shrikumar2017learning}
Avanti Shrikumar, Peyton Greenside, and Anshul Kundaje.
\newblock Learning important features through propagating activation differences.
\newblock In \emph{International conference on machine learning}, pages 3145--3153. PMLR, 2017.

\bibitem[Su et~al.(2024)Su, Ahmed, Lu, Pan, Bo, and Liu]{su2024roformer}
Jianlin Su, Murtadha Ahmed, Yu~Lu, Shengfeng Pan, Wen Bo, and Yunfeng Liu.
\newblock Roformer: Enhanced transformer with rotary position embedding.
\newblock \emph{Neurocomputing}, 568:\penalty0 127063, 2024.

\bibitem[Team et~al.(2024)Team, Mesnard, Hardin, Dadashi, Bhupatiraju, Pathak, Sifre, Rivi{\`e}re, Kale, Love, et~al.]{team2024gemma}
Gemma Team, Thomas Mesnard, Cassidy Hardin, Robert Dadashi, Surya Bhupatiraju, Shreya Pathak, Laurent Sifre, Morgane Rivi{\`e}re, Mihir~Sanjay Kale, Juliette Love, et~al.
\newblock Gemma: Open models based on gemini research and technology.
\newblock \emph{arXiv preprint arXiv:2403.08295}, 2024.

\bibitem[Touvron et~al.(2021)Touvron, Cord, Douze, Massa, Sablayrolles, and J{\'e}gou]{touvron2021training}
Hugo Touvron, Matthieu Cord, Matthijs Douze, Francisco Massa, Alexandre Sablayrolles, and Herv{\'e} J{\'e}gou.
\newblock Training data-efficient image transformers \& distillation through attention.
\newblock In \emph{International conference on machine learning}, pages 10347--10357. PMLR, 2021.

\bibitem[Touvron et~al.(2023)Touvron, Lavril, Izacard, Martinet, Lachaux, Lacroix, Rozi{\`e}re, Goyal, Hambro, Azhar, et~al.]{touvron2023llama}
Hugo Touvron, Thibaut Lavril, Gautier Izacard, Xavier Martinet, Marie-Anne Lachaux, Timoth{\'e}e Lacroix, Baptiste Rozi{\`e}re, Naman Goyal, Eric Hambro, Faisal Azhar, et~al.
\newblock Llama: Open and efficient foundation language models.
\newblock \emph{arXiv preprint arXiv:2302.13971}, 2023.

\bibitem[Vaswani(2017)]{vaswani2017attention}
A~Vaswani.
\newblock Attention is all you need.
\newblock \emph{Advances in Neural Information Processing Systems}, 2017.

\bibitem[Voita et~al.(2021)Voita, Sennrich, and Titov]{voita2021analyzing}
Elena Voita, Rico Sennrich, and Ivan Titov.
\newblock Analyzing the source and target contributions to predictions in neural machine translation.
\newblock In \emph{Proceedings of the 59th Annual Meeting of the Association for Computational Linguistics and the 11th International Joint Conference on Natural Language Processing (Volume 1: Long Papers)}, pages 1126--1140, 2021.

\bibitem[Xiong et~al.(2024)Xiong, Li, Zhang, Chen, Sun, Li, Sun, and Du]{xiong2024explainableartificialintelligencexai}
Haoyi Xiong, Xuhong Li, Xiaofei Zhang, Jiamin Chen, Xinhao Sun, Yuchen Li, Zeyi Sun, and Mengnan Du.
\newblock Towards explainable artificial intelligence (xai): A data mining perspective, 2024.
\newblock URL \url{https://arxiv.org/abs/2401.04374}.

\bibitem[Yuan et~al.(2021)Yuan, Li, Xiong, Cao, and Dou]{yuan2021explaining}
Tingyi Yuan, Xuhong Li, Haoyi Xiong, Hui Cao, and Dejing Dou.
\newblock Explaining information flow inside vision transformers using markov chain.
\newblock In \emph{eXplainable AI approaches for debugging and diagnosis.}, 2021.

\bibitem[Zhang et~al.(2024)Zhang, Zeng, Wang, and Lu]{zhang2024tinyllama}
Peiyuan Zhang, Guangtao Zeng, Tianduo Wang, and Wei Lu.
\newblock Tinyllama: An open-source small language model.
\newblock \emph{arXiv preprint arXiv:2401.02385}, 2024.

\bibitem[Zhang and Zhu(2018)]{zhang2018visual}
Quan-shi Zhang and Song-Chun Zhu.
\newblock Visual interpretability for deep learning: a survey.
\newblock \emph{Frontiers of Information Technology \& Electronic Engineering}, 19\penalty0 (1):\penalty0 27--39, 2018.

\bibitem[Zhang et~al.(2021)Zhang, Ti{\v{n}}o, Leonardis, and Tang]{zhang2021survey}
Yu~Zhang, Peter Ti{\v{n}}o, Ale{\v{s}} Leonardis, and Ke~Tang.
\newblock A survey on neural network interpretability.
\newblock \emph{IEEE Transactions on Emerging Topics in Computational Intelligence}, 5\penalty0 (5):\penalty0 726--742, 2021.

\bibitem[Zimerman et~al.(2025)Zimerman, Ali, and Wolf]{zimerman2025explaining}
Itamar Zimerman, Ameen~Ali Ali, and Lior Wolf.
\newblock Explaining modern gated-linear {RNN}s via a unified implicit attention formulation.
\newblock In \emph{The Thirteenth International Conference on Learning Representations}, 2025.
\newblock URL \url{https://openreview.net/forum?id=wnT8bfJCDx}.

\end{thebibliography}
\newpage
\appendix

\section{Background for Additional PEs\label{app:addPE}}
In this appendix, we introduce additional PEs beyond those presented in Section~\ref{sec:relatedWork}.

\noindent\textbf{Relative Positional Bias (RPB).\quad}%
Similar to Alibi, RPB~\citep{raffel2020exploring} modifies the attention scores by introducing a learnable bias term that depends on the relative distance between query and key tokens. For a query at position $i$ and a key at position $j$, the attention scores are adjusted as follows:
\begin{equation}
    A_{i,j} = A_{i,j} + B(|i-j|)
\end{equation}
where $B(i-j)$ is a learned bias function that depends only on the relative position difference $(i-j)$, rather than the absolute positions.

\noindent\textbf{Attention with Linear Biases (ALiBi).\quad}%
ALiBi~\citep{presstrain} is a positional encoding method designed to help transformers generalize to longer sequences when trained on shorter ones. Instead of using explicit positional embeddings, ALiBi modifies attention scores directly by introducing a learned linear bias that penalizes attention weights based on token distance.

Specifically, for a query token at position $j$, Alibi adjusts the attention scores as follows:
\begin{equation}
    A'_{i,j} = A_{i,j} + m (|i-j|)
\end{equation}
where $m$ is a learned or predefined slope that controls how quickly attention strength decays with distance. Different attention heads can use different slopes, enabling some heads to focus more on local interactions while others capture long-range dependencies.

\section{PA-LRP Rules for Alibi\label{app:AlibiRules}}
 
Recall the main modification in the ALiBi computation:
 \begin{equation}
    A'_{i,j} = A_{i,j} + P_{i,j}, \text{ where } P_{i,j}= m (|i-j|)
\end{equation}
Adopting the same approach presented for RoPE, given the relevancy scores of $A'_{i,j}$, denoted by $\mathcal{R}(A'_{i,j})$, we define specialized rules to propagate relevancy from $A'_{i,j}$ to the positional terms of ALiBi at each layer, namely, indices $i$ and $j$. We begin by distributing the relevancy scores between $A_{i,j}$ and $P_{i,j}$, using the standard LRP-$\epsilon$ rule for addition, giving us:
{\vspace{-5pt}
\begin{equation} 
    \mathcal{R}(P_{i,j}) = P_{i,j}\frac{\mathcal{R}(A'_{i,j})}{A_{i,j}+P_{i,j} +\epsilon} 
\end{equation}
}
We proceed to propagate the relevancy scores $\mathcal{R}(P_{i,j})$ to the positional encoding $i$ and $j$ in a similar fashion to our rules for RoPE. We make the following observations:  (i) $m$ is a constant, resulting in 100\% of the relevancy to propagate from $P_{i,j}$ to $|i-j|$. (ii) Since we are using auto-regressive models, we get that $i>j$, allowing us to ignore the absolute value function (iii) The standard LRP-$\epsilon$ rule for addition applies the same of subtraction, as we can express $i-j$ as $i+(-j)$, and also $-j =(-1)\cdot j$, and since $-1$ is constant, we propagate the entire relevancy to $j$. That gives us:
{\vspace{-5pt}
\begin{equation} 
    \mathcal{R}(i) = i\frac{\mathcal{R}(P_{i,j})}{i+(-j)+ \epsilon}\text{,} \quad \mathcal{R}(j) = \mathcal{R}(-j) = j\frac{\mathcal{R}(P_{i,j})}{i+(-j)+ \epsilon}
\end{equation}
}
From hereon we adhere to our PA-LRP rules, aggregating the relevance scores of the positional terms across all layers as employed in Section~\ref{subsec:PELRP-ATTN}.

\section{Additional Qualitative Results in Vision\label{sec:additonalExamples}}
In addition to Figure~\ref{fig:qualitative_cmp_vision}, we provide more examples in Figures \ref{fig:attVisExamples1} -~\ref{fig:attVisExamples3}. As previously explained, PE-associated relevance better highlights the entire object, and overcomes the issue of over-consideration of the foreground, where extremely high relevancy scores are produced for patches which are more concerned with semantics or common patterns, like a bird's beak in the first row in Figure~\ref{fig:attVisExamples2}.

\begin{figure*}[t]
\centering
\begin{tabular}{@{\hskip 0.03in}c@{\hskip 0.03in}c@{\hskip 0.03in}c@{\hskip 0.03in}c@{\hskip 0.03in}} 
    \includegraphics[width=0.20\textwidth]{figures/vision/img1/img_196.png} & 
    \includegraphics[width=0.20\textwidth]{figures/vision/img1/196_basic_custom_lrp.png} &
    \includegraphics[width=0.20\textwidth]{figures/vision/img1/196_basic_custom_lrp_PE_ONLY.png} & \includegraphics[width=0.20\textwidth]{figures/vision/img1/196_basic_custom_lrp_SEMANTIC_ONLY.png}
 \\
  \includegraphics[width=0.20\textwidth]{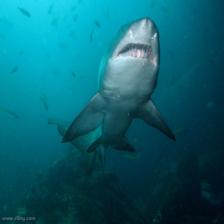} & \includegraphics[width=0.20\textwidth]{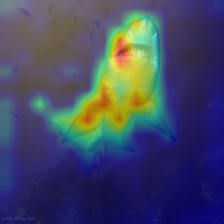} &
  \includegraphics[width=0.20\textwidth]{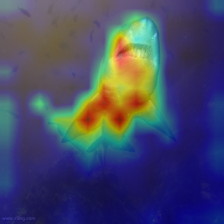} &
\includegraphics[width=0.20\textwidth]{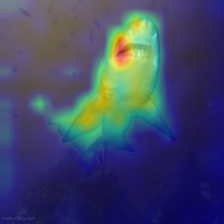}
 \\
  \includegraphics[width=0.20\textwidth]{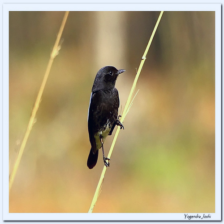} & \includegraphics[width=0.20\textwidth]{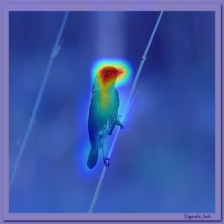} &
  \includegraphics[width=0.20\textwidth]{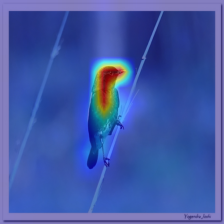} &
\includegraphics[width=0.20\textwidth]{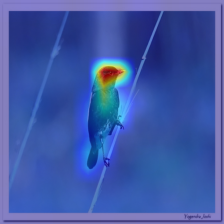}
 \\
  \includegraphics[width=0.20\textwidth]{figures/vision/img4/img_155.png} & \includegraphics[width=0.20\textwidth]{figures/vision/img4/155_basic_custom_lrp.png} &
  \includegraphics[width=0.20\textwidth]{figures/vision/img4/155_basic_custom_lrp_PE_ONLY.png} &
\includegraphics[width=0.20\textwidth]{figures/vision/img4/155_basic_custom_lrp_SEMANTIC_ONLY.png}
 \\
  \includegraphics[width=0.20\textwidth]{figures/vision/img5/img_165.png} & \includegraphics[width=0.20\textwidth]{figures/vision/img5/165_basic_custom_lrp.png} &
  \includegraphics[width=0.20\textwidth]{figures/vision/img5/165_basic_custom_lrp_PE_ONLY.png} &
\includegraphics[width=0.20\textwidth]{figures/vision/img5/165_basic_custom_lrp_SEMANTIC_ONLY.png}
 \\
  \includegraphics[width=0.20\textwidth]{figures/vision/img6/img_157.png} & \includegraphics[width=0.20\textwidth]{figures/vision/img6/157_basic_custom_lrp.png} &
  \includegraphics[width=0.20\textwidth]{figures/vision/img6/157_basic_custom_lrp_PE_ONLY.png} &
\includegraphics[width=0.20\textwidth]
{figures/vision/img6/157_basic_custom_lrp_SEMANTIC_ONLY.png}
 \end{tabular}
\caption{\textbf{Additional Qualitative Results In Vision.} Results of different explanation methods for DeiT. (a) The input image. (b) PA-LRP (ours), which include PE relevancy attribution. (c) PE only LRP, (d) AttnLRP~\citep{achtibat2024attnlrp}, which does not attribute relevancy to PE.\label{fig:attVisExamples1}}
\end{figure*}

\begin{figure*}
\centering
\begin{tabular}{@{\hskip 0.03in}c@{\hskip 0.03in}c@{\hskip 0.03in}c@{\hskip 0.03in}c@{\hskip 0.03in}} 
 
    \includegraphics[width=0.2\textwidth]{figures/vision/img17/img17.png} & 
    \includegraphics[width=0.2\textwidth]{figures/vision/img17/img17_ours.png} &
    \includegraphics[width=0.2\textwidth]{figures/vision/img17/img17_pos.png} & \includegraphics[width=0.2\textwidth]{figures/vision/img17/img17_sem.png} 
 \\
  \includegraphics[width=0.20\textwidth]{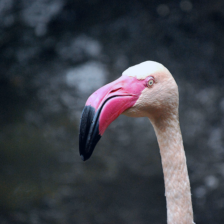} & \includegraphics[width=0.20\textwidth]{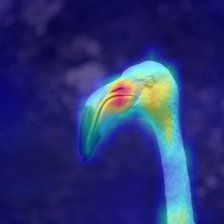} &
  \includegraphics[width=0.20\textwidth]{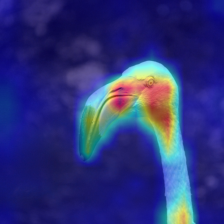} &
\includegraphics[width=0.20\textwidth]{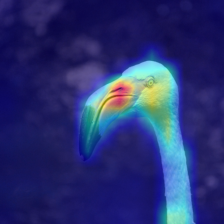}
 \\
  \includegraphics[width=0.20\textwidth]{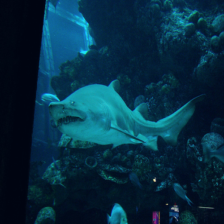} &
\includegraphics[width=0.20\textwidth]{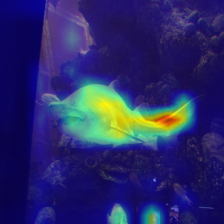} & \includegraphics[width=0.20\textwidth]{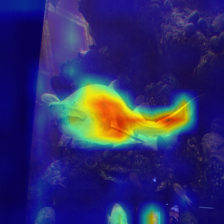} &
  \includegraphics[width=0.20\textwidth]{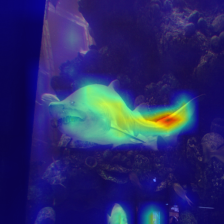}
 \\
\includegraphics[width=0.20\textwidth]{figures/vision/img13/img13.png} & 
\includegraphics[width=0.20\textwidth]{figures/vision/img13/img13_ours.png} &
\includegraphics[width=0.20\textwidth]{figures/vision/img13/img13_pos.png} & \includegraphics[width=0.20\textwidth]{figures/vision/img13/img13_sem.png}
\\
  \includegraphics[width=0.20\textwidth]{figures/vision/img11/img_5.png} & \includegraphics[width=0.20\textwidth]{figures/vision/img11/5_basic_custom_lrp.png} &
  \includegraphics[width=0.20\textwidth]{figures/vision/img11/5_basic_custom_lrp_PE_ONLY.png} &
\includegraphics[width=0.20\textwidth]{figures/vision/img11/5_basic_custom_lrp_SEMANTIC_ONLY.png}
\\
  \includegraphics[width=0.20\textwidth]{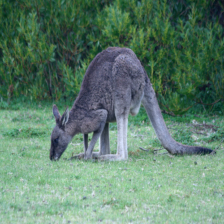} & \includegraphics[width=0.20\textwidth]{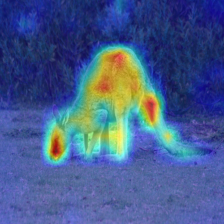} &
  \includegraphics[width=0.20\textwidth]{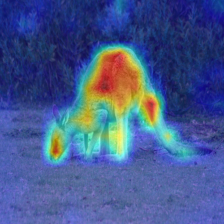} &
   \includegraphics[width=0.20\textwidth]{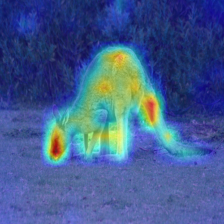}
 \end{tabular}
\caption{\textbf{Additional Qualitative Results In Vision.} Results of different explanation methods for DeiT. (a) The input image. (b) PA-LRP (ours), which include PE relevancy attribution. (c) PE only LRP, (d) AttnLRP~\citep{achtibat2024attnlrp}, which does not attribute relevancy to PE.\label{fig:attVisExamples2}}
\end{figure*}

\begin{figure*}
\centering
\begin{tabular}{@{\hskip 0.03in}c@{\hskip 0.03in}c@{\hskip 0.03in}c@{\hskip 0.03in}c@{\hskip 0.03in}} 
    \includegraphics[width=0.20\textwidth]{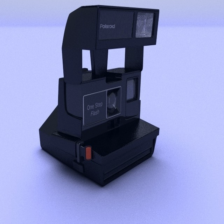} & 
    \includegraphics[width=0.20\textwidth]{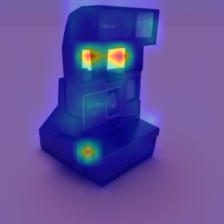} &
    \includegraphics[width=0.20\textwidth]{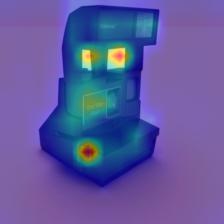} & \includegraphics[width=0.20\textwidth]{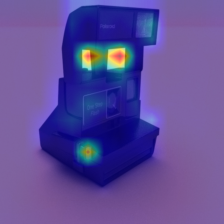}
 \\
   
    \includegraphics[width=0.20\textwidth]{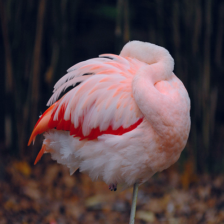} & \includegraphics[width=0.20\textwidth]{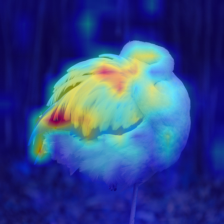} &
  \includegraphics[width=0.20\textwidth]{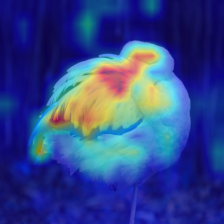} &
\includegraphics[width=0.20\textwidth]{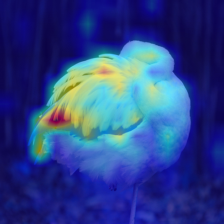}
 \\
    \includegraphics[width=0.20\textwidth]{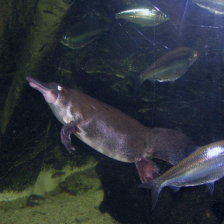} & 
    \includegraphics[width=0.20\textwidth]{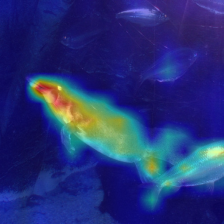} &
    \includegraphics[width=0.20\textwidth]{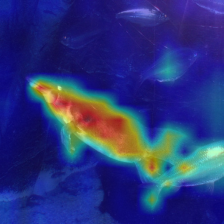} & \includegraphics[width=0.20\textwidth]{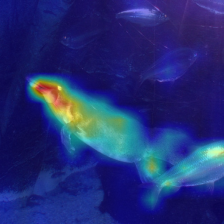}
 \\
    \includegraphics[width=0.20\textwidth]{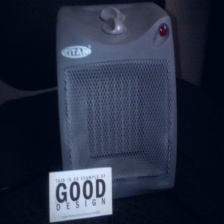} & 
    \includegraphics[width=0.20\textwidth]{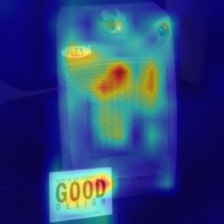} &
    \includegraphics[width=0.20\textwidth]{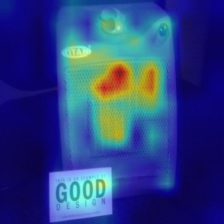} & \includegraphics[width=0.20\textwidth]{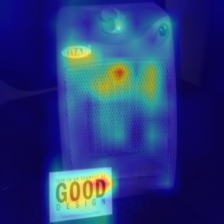}
\\
    \includegraphics[width=0.20\textwidth]{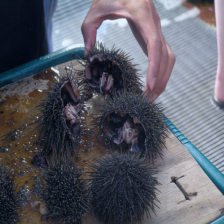} & 
    \includegraphics[width=0.20\textwidth]{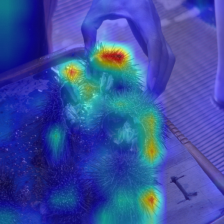} &
    \includegraphics[width=0.20\textwidth]{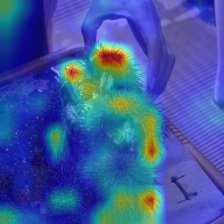} & \includegraphics[width=0.20\textwidth]{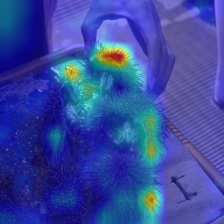}
\\
  \includegraphics[width=0.20\textwidth]{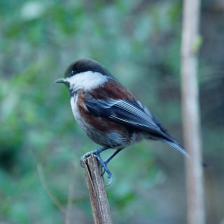} & 
    \includegraphics[width=0.20\textwidth]{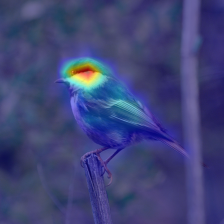} &
    \includegraphics[width=0.20\textwidth]{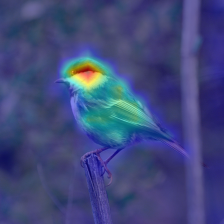} & \includegraphics[width=0.20\textwidth]{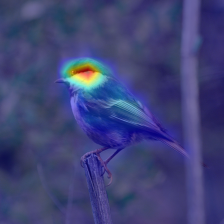}
 \end{tabular}
\caption{\textbf{Additional Qualitative Results In Vision.} Results of different explanation methods for DeiT. (a) The input image. (b) PA-LRP (ours), which include PE relevancy attribution. (c) PE only LRP, (d) AttnLRP~\citep{achtibat2024attnlrp}, which does not attribute relevancy to PE.\label{fig:attVisExamples3}}
\end{figure*}

\clearpage
\section{Additional Qualitative Results in NLP\label{sec:additonalNLPExamples}}

We present qualitative results for NLP in Figure~\ref{fig:qualitative_cmp_NLP}. It can be seen that our method demonstrates better results in highlighting tokens crucial for prediction, along with their surrounding context, emphasizing it's superiority to draw relevancy based on both semantics and positionally. In (b) we see that the amount of artifacts is reduced drastically, with more relevancy channeled to the tokens essential for prediction ("They should have been giving a tribute to Branagh for bringing us one of the greatest films of all time").

\begin{figure*}[t]
\centering \small
\resizebox{0.89\textwidth}{!}{%
    \begin{tabular}{l}
    \input{figures/NLP/samples/3/heatmap_11_baseline}\\
    \input{figures/NLP/samples/3/heatmap_11_PE}\\ 
    \input{figures/NLP/samples/3/heatmap_11_LRP+PE+REFORM}\\
    \multicolumn{1}{c}{(a)} \\ 
    \input{figures/NLP/samples/7/heatmap_25_baseline}\\
    \input{figures/NLP/samples/7/heatmap_25_PE} \\ 
    \input{figures/NLP/samples/7/heatmap_25_PE+LRP}\\
    \multicolumn{1}{c}{(b)} \\ 
     \end{tabular}
}
 \vspace{-5pt}
\caption{\small \textbf{Qualitative Results in NLP.} Both groups (a) and (b) present results from different explanation methods for the same example obtained from the IMDB benchmark. In each group, the first row represents the AttnLRP baseline, followed by the PE-only variant in the middle, and finally, our maps at the end.\label{fig:qualitative_cmp_NLP}}
\end{figure*}




%

\newpage

\newpage

\clearpage
\section{Proofs of Lemmas\label{app:Proofs}}
{
\renewcommand{\thesection}{\arabic{section}} 
\setcounter{lemma}{0}
\setcounter{section}{3}
\begin{lemma}\label{lem:lemma1Appendix}
\textit{For input-level PE transformers, the conservation property is violated when disregarding the positional embeddings' relevancy scores.}
\end{lemma}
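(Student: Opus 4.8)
The plan is to isolate the single computational node at which positional information enters an input-level PE transformer---the elementwise sum $z_i = E_i + P'_i$ feeding the first block---and to show that the conservation property forces the relevance of $z_i$ to split between \emph{both} summands, so that discarding the $P'_i$ branch leaks a generically nonzero amount of relevance.

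First I would recall what conservation demands at this node: for the addition layer $z_i = E_i + P'_i$, preservation of total relevance across the layer means $\mathcal{R}(z_i) = \mathcal{R}(E_i) + \mathcal{R}(P'_i)$. Applying the LRP-$\epsilon$ rule for addition (the rule of Eq.~\ref{eq:LernablePELRP} together with its $E_i$ counterpart) yields the explicit split
\[
\mathcal{R}(E_i) = E_i\frac{\mathcal{R}(z_i)}{E_i + P'_i + \epsilon}, \qquad \mathcal{R}(P'_i) = P'_i\frac{\mathcal{R}(z_i)}{E_i + P'_i + \epsilon},
\]
whose sum equals $\mathcal{R}(z_i)$ up to the standard $\epsilon$ stabilizer.

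Next I would formalize what it means to ``disregard'' the positional relevancy: prior methods define the input space as $\{E_i\}$ alone, so the relevance they retain at the input is $\sum_i \mathcal{R}(E_i)$, whereas the relevance actually arriving at the first block is $\sum_i \mathcal{R}(z_i)$, which---by conservation through all the layers above---equals the explained model output $\mathcal{R}(f(x))$. Subtracting, the retained input relevance falls short of the output relevance by exactly $\sum_i \mathcal{R}(P'_i)$, so it remains only to certify that this leaked term is generically nonzero.

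The main obstacle is precisely ruling out accidental cancellation: I must establish $\sum_i \mathcal{R}(P'_i) \neq 0$, not merely $\mathcal{R}(P'_i) \neq 0$ coordinatewise. I would handle this by observing that $\mathcal{R}(P'_i) = 0$ for every $i$ would require $P'_i = 0$ (ignoring $\epsilon$) at each coordinate receiving nonzero $\mathcal{R}(z_i)$, contradicting the use of any nontrivial positional encoding; alternatively, I would exhibit a minimal concrete instance---e.g.\ a single token with $P'_i, E_i > 0$ and $\mathcal{R}(z_i) > 0$---in which the leaked mass is strictly positive. Either route shows the input-level relevance no longer sums to the output relevance, so conservation is violated whenever positional relevance is discarded, completing the proof.
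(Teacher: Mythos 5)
Your proposal is correct and follows essentially the same route as the paper's proof: invoke conservation through all layers down to the input addition node $z_i = E_i + P'_i$, decompose the relevance there into $\mathcal{R}(E_i) + \mathcal{R}(P'_i)$ via the LRP-$\epsilon$ addition rule, and observe that retaining only $\sum_i \mathcal{R}(E_i)$ falls short of the conserved total by exactly $\sum_i \mathcal{R}(P'_i)$. The one point where you are more careful than the paper is the final non-cancellation step: the paper simply asserts $\sum (\mathcal{R}_E + \mathcal{R}_P) \neq \sum \mathcal{R}_E$ without ruling out $\sum \mathcal{R}_P = 0$, whereas you explicitly argue this term is generically nonzero for any nontrivial positional encoding---a gap the paper instead addresses only empirically, in its appendix on conservation percentages.
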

}

\begin{proof}[Proof of Lemma~\ref{lem:lemma1Appendix}]
Let $Z$ be our input representation to the first transformer layer, such that $Z = P + E$, where $P$ and $E$ are the token and positional embeddings, respectively. Let $L$ be the number of layers in our transformer. Following the conservation property, the sum of the relevancy scores at any given layer $l$ should uphold: \begin{equation}\sum \mathcal{R}^{(L)} =\sum \mathcal{R}^{(l)} =\sum \mathcal{R}^{(0)} = \sum  R_{Z} = \sum  (\mathcal{R}_{E} + \mathcal{R}_{P})\end{equation}
When ignoring $\mathcal{R}_{P}$, we get the final relevancy attribution map $\mathcal{R}_{input}$, such that: \begin{equation} \sum \mathcal{R}^{(l)} = \sum (\mathcal{R}_{E} + \mathcal{R}_{P}) \neq \mathcal{R}_{E} =  \mathcal{R}_{input} \end{equation}
directly violating the conservation property rule
\end{proof}

{
\renewcommand{\thesection}{\arabic{section}} 
\setcounter{lemma}{1}
\setcounter{section}{3}
\begin{lemma}\label{lem:lemma2Appendix}
\textit{For attention‐level PE transformers, our PE‐LRP rules satisfy the conservation property.}
\end{lemma}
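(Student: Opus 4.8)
The plan is to reduce the claim to the single new operation that PA-LRP introduces on top of AttnLRP, namely the propagation of relevance backward through the RoPE rotation $\tilde{\bm{Q}}_i = R_{i,k}\bm{Q}_i$ and $\tilde{\bm{K}}_i = R_{i,k}\bm{K}_i$. Every other component of the attention block (the score product $\tilde{\bm{Q}}\tilde{\bm{K}}^\top$, the softmax, the value multiplication, together with the FFN and LayerNorm peripherals) is handled by the unmodified AttnLRP rules, which are already conservative by construction. Hence it suffices to verify that, at the rotation layer alone, the relevance arriving on $\tilde{\bm{Q}}_i$ and $\tilde{\bm{K}}_i$ is redistributed without loss across the three upstream quantities $\{R_{i,k},\bm{Q}_i,\bm{K}_i\}$. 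Concretely, I will establish the local identity $\mathcal{R}(R_{i,k})+\mathcal{R}(\bm{Q}_i)+\mathcal{R}(\bm{K}_i)=\mathcal{R}(\tilde{\bm{Q}}_i)+\mathcal{R}(\tilde{\bm{K}}_i)$, and then show that the flattening of Eq.~\ref{eq:flattenRelevance} merely reshapes relevance and therefore preserves its sum.

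The core step is the bilinear handling of each product. Treating $\tilde{\bm{Q}}_i = R_{i,k}\bm{Q}_i$ as a product of the two operands $R_{i,k}$ and $\bm{Q}_i$, the matrix-multiplication rule of AttnLRP decomposes a degree-two (bilinear) form symmetrically: the output is homogeneous of degree one in each operand separately, so by Euler's identity input$\times$gradient over either operand alone reproduces the entire output, and summing both naively would yield twice the relevance; conservation therefore forces a factor $\tfrac12$ on each operand. Thus the query path sends $\tfrac12\mathcal{R}(\tilde{\bm{Q}}_i)$ to $R_{i,k}$ and $\tfrac12\mathcal{R}(\tilde{\bm{Q}}_i)$ to $\bm{Q}_i$, while the key path analogously sends $\tfrac12\mathcal{R}(\tilde{\bm{K}}_i)$ to $R_{i,k}$ and $\tfrac12\mathcal{R}(\tilde{\bm{K}}_i)$ to $\bm{K}_i$. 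Because the same rotation $R_{i,k}$ feeds both paths, its incoming relevance is the sum of the two contributions, which is exactly Eq.~\ref{eq:PA-LRP-matmul}: $\mathcal{R}(R_{i,k})=\tfrac12\mathcal{R}(\tilde{\bm{Q}}_i)+\tfrac12\mathcal{R}(\tilde{\bm{K}}_i)$. Adding the three pieces gives $\big(\tfrac12+\tfrac12\big)\mathcal{R}(\tilde{\bm{Q}}_i)+\big(\tfrac12+\tfrac12\big)\mathcal{R}(\tilde{\bm{K}}_i)=\mathcal{R}(\tilde{\bm{Q}}_i)+\mathcal{R}(\tilde{\bm{K}}_i)$, which is the desired local conservation.

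Finally, I will note that Eq.~\ref{eq:flattenRelevance} maps $R_{i,k}\in\mathbb{R}^{D\times D}$ to $P_{i,k}\in\mathbb{R}^{D^2}$ by a bijective reindexing, so $\sum \mathcal{R}(P_{i,k})=\sum \mathcal{R}(R_{i,k})$; chaining this with the conservative AttnLRP rules upstream and the local conservation just shown yields global conservation, once the per-layer positional sinks of Eq.~\ref{eq:newInputSpace} are included to absorb $\mathcal{R}(P_{i,k})$. I expect the main obstacle to be the bookkeeping around the shared rotation matrix: one must argue that routing relevance into $R_{i,k}$ from both the query and key products is legitimate rather than double counting, and this holds precisely because the $\tfrac12$ factors arise from the conservative bilinear decomposition of each product \emph{separately}, so their sum is the genuine total relevance of $R_{i,k}$. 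I would also flag two points to state explicitly rather than prove: any $\epsilon$-stabiliser used in the LRP-$\epsilon$ steps perturbs exact conservation by $O(\epsilon)$, so the statement is understood in the $\epsilon\to0$ limit; and the positive-part aggregation of Eq.~\ref{eq:aggrPositive} is a deliberate, non-conservative post-processing for producing the final heatmap and lies outside the scope of this lemma.
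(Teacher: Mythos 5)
Your proposal is correct and takes essentially the same route as the paper's proof: the paper likewise reduces the claim to the observation that every multi-tensor operation in the rotary attention layer is a matrix multiplication whose relevance is split uniformly between operands (your Euler-identity factor of $\tfrac12$, yielding the local identity $\mathcal{R}(R_{i,k})+\mathcal{R}(\bm{Q}_i)+\mathcal{R}(\bm{K}_i)=\mathcal{R}(\tilde{\bm{Q}}_i)+\mathcal{R}(\tilde{\bm{K}}_i)$), and then chains the per-layer absorbing sinks through the recursion $\mathcal{R}^{(l)}=\mathcal{R}^{(l-1)}+\mathcal{R}_{P}^{(l)}$, which unrolls to $\mathcal{R}^{(M)}=\mathcal{R}_{E}+\sum_{l}\mathcal{R}_{P}^{(l)}$ exactly as in your chaining argument. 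Your extra points---that the flattening of Eq.~\ref{eq:flattenRelevance} is a sum-preserving reindexing, that conservation is exact only in the $\epsilon\to 0$ limit, and that the positive-part aggregation of Eq.~\ref{eq:aggrPositive} is deliberately non-conservative post-processing outside the lemma's scope---are sound refinements of details the paper leaves implicit.
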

}

\begin{proof}[Proof of Lemma~\ref{lem:lemma2Appendix}]
Let $M$ be the number of layers in our Transformer, and $L$ the sequence length. We denote $\mathcal{R}^{(l)}$ as the relevancy score of the output at layer $l$. Beginning with $\mathcal{R}^{(M)}$ as the the model's output propagating relevancy backwards to achieve the final explanation map for the input embeddings $R_{E}$,  we assume that the standard LRP method does not violate conservation, i.e:
\begin{equation} \forall l \in [M]: \quad   \mathcal{R}^{(M)} = \mathcal{R}^{(l)} = \mathcal{R}_{E} \end{equation}
Recall that for our PA-LRP formulation, we achieve the final explanation map by summing together the semantic attribution $\mathcal{R}_{E}$, achieved by the standard LRP rules, and the positional relevancy $\mathcal{R}_{P}^{(l)}$ distributed across the absorbing sinks at each attention layer $l \in [M]$, giving us the final relevancy map $R_{E} + \sum_{l}\mathcal{R}_{P}^{(l)}$. We aim to prove the following:
\begin{equation}\label{eq:termToProveLemma3.2}   \mathcal{R}^{(M)} =\mathcal{R}_{E} + \sum_{l}\mathcal{R}_{P}^{(l)} \end{equation}
Each attention layer in the transformer is computed using rotary attention:
{\vspace{-2pt}
\begin{equation} 
\forall i \in [L], k \in [K]: \tilde{\bm{Q}}_i = R_{i,k} \bm{Q}_i, \quad \tilde{\bm{K}}_i = R_{i,k} \bm{K}_i,\quad
    \text{RoPE Attention}(X) = \text{Softmax} \left( \frac{\tilde{\bm{Q}}\tilde{\bm{K}}^T}{\sqrt{d_k}} \right) V\,.
\end{equation}
Notice that any computation in this layer which involves more than one tensor, is a matrix multiplication function. Adopting the existing baseline, we use the uniform relevance propagation rule,  distributing the relevancy evenly between components. Thus, the relevancy scores of $Q,K,V,P$, with $P$ denoting the rotation matrix, is equal, and added together to the relevancy of the attention layer's output. The absorbing sink mechanism results in the following:
\begin{equation} \mathcal{R}^{(0)}= \mathcal{R}_{E},\quad \forall l \in [M]: \quad   \mathcal{R}^{(l)} = \mathcal{R}^{(l-1)} + \mathcal{R}_{P}^{(l)} \end{equation}
Following this recursion we would get the exact same result as Eq.~\ref{eq:termToProveLemma3.2}
}
\end{proof}

{
\renewcommand{\thesection}{\arabic{section}} 
\setcounter{lemma}{2}
\setcounter{section}{3}
\begin{lemma}\label{lem:lemma3Appendix}
\textit{For attention‐level PE transformers, current LRP attribution rules achieve low faithfulness, especially when considering positional features.}
\end{lemma}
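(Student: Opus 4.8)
The plan is to establish the statement not in full generality but through an explicit minimal counterexample, as the lemma's phrasing (``based on a key example'') anticipates. I would fix a single-layer, single-head RoPE attention block over a short sequence, followed by a linear readout, and engineer a \emph{position-determined} task: one whose output is, by construction, a function of the relative positions alone. The natural vehicle is the RoPE composition identity $R_{i,k}^{T} R_{j,k} = R_{j-i,k}$, which makes the pre-softmax logit $\bm{Q}_i^{T} R_{i,k}^{T} R_{j,k} \bm{K}_j = \bm{Q}_i^{T} R_{j-i,k} \bm{K}_j$ depend only on the offset $j-i$. I would then choose the token embeddings and the value/readout weights so that two inputs $x$ and $x'$ that are \emph{position permutations of one another} (identical token embeddings, reordered) yield different model outputs $f(x) \neq f(x')$. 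This exhibits, concretely, a case in which the output genuinely depends on position while the semantic content is held fixed up to reordering.

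The first substantive step is to trace the baseline (AttnLRP) propagation and confirm that it routes \emph{zero} relevance to the positional features. Since the baseline only propagates relevance onto $\tilde{\bm{Q}}, \tilde{\bm{K}}, V$ and then treats each rotation $R_{i,k}$ as a fixed linear operator whose relevance is passed entirely to its semantic operand $\bm{Q}_i$ (respectively $\bm{K}_i$), the positional factor never enters the input-level map --- the same omission of positional attribution highlighted in Lemma~\ref{lem:lemma1}. The second step is to translate this omission into a faithfulness failure. I would adopt a standard faithfulness criterion rewarding alignment between attributed relevance and the causal effect of perturbing a feature: in the constructed task the positional feature has \emph{maximal} causal effect (flipping the relative order flips the output), yet it carries zero relevance, so the baseline must score poorly under any such criterion. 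To make this sharp I would choose a symmetric/antisymmetric value assignment (e.g.\ $V$ and $-V$ on the two content tokens) so that the surviving semantic relevance is balanced across the permuted tokens; then the two opposite-output inputs $x$ and $x'$ receive \emph{indistinguishable} baseline maps, directly witnessing that the explanation cannot account for the output difference.

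The hard part will be making the faithfulness notion precise enough that the claim is both true and non-trivial, and then verifying by explicit computation in the minimal example that the baseline map is genuinely uninformative about position. The delicate point is that the forward attention weights \emph{do} vary with position, so the backward relevance distributed to $V$ is not a priori position-independent; the argument must show this residual dependence does not recover the positional cause --- which is precisely what the symmetry of the construction is engineered to guarantee, equalizing the semantic relevance across the two permuted tokens so that the positional signal is provably discarded rather than merely redistributed. Once the symmetry is set up correctly, the remaining computation (evaluating the attention weights, the readout, and the baseline relevance on the two inputs) is routine.
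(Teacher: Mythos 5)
Your proposal is sound in spirit and shares the paper's high-level strategy---a minimal engineered counterexample in which the output is determined by position alone while the baseline LRP routes zero relevance to the positional features---but the construction and the notion of failure are genuinely different. The paper uses a single causal attention layer with \emph{ALiBi} (not RoPE), scalar tokens, $L=2$, and an affine value map, and sets $W_Q = W_K = W_V = 0$ with $b \neq 0$: the positional signal is carried entirely by the additive ALiBi bias and the value bias, both of which standard rules discard (the $\epsilon$-rule ignores affine bias terms, and the positional offset is treated as a constant), so the baseline attribution map is \emph{identically zero} by inspection, while the PA-LRP rules of Appendix~B recover the positional terms. Your RoPE route instead keeps the semantic pathways alive and argues for a discriminativeness failure: two permuted inputs with $f(x) \neq f(x')$ receiving indistinguishable maps. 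That is arguably a sharper form of unfaithfulness---in the paper's example the tokens genuinely do not matter, so the zero map is only ``unfaithful'' relative to the reformulated position-token input space of Eq.~(4)---but it comes at the cost of the two steps you yourself flag as open, and these are not cosmetic. First, with causal attention at $L=2$, position~1 attends only to itself, so the permutation changes the position-1 output through the token content directly; you must read out solely from position~2 (or drop causality) for the ``position-determined'' framing to hold. Second, the symmetry verification is the actual crux: the relevance distributed onto $V_1, V_2$ (and onto $Q, K$ through the rotated logits $\bm{Q}_i^{T} R_{j-i} \bm{K}_j$) varies with the permutation, and your antisymmetric-value device must be checked by explicit computation to equalize it; until that is done, the claim that the positional signal is ``provably discarded rather than merely redistributed'' is a conjecture. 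The paper's choice of zeroed weights sidesteps exactly this difficulty---every pathway is zero, so nothing needs to be balanced---which is why its proof closes in a few lines where yours still has a computation outstanding.
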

}

\begin{proof}[Proof of Lemma~\ref{lem:lemma3Appendix}]

\noindent We define a basic learning problem which relies solely on positional features, proving that existing LRP-based explanation methods which don't propagate relevance through positional encodings, will not produce faithful explanations. 
Let us assume we use an auto-regressive transformer model (e.g GPT), with a single causal self-attention with Alibi PE, and the Value projection replaced by an affine transformation (instead of a linear layer). Also, for brevity, let us consider scalar input tokens with sequence length of $L=2$, denoted by $x_{1}, x_{2}$. The final model uses the following keys ($K$), queries ($Q$), and values ($V$):
\begin{equation}
    \forall i \in [1,2]: \quad Q_i = W_Q X_i,\quad ,K_i  =  W_K X_i,\quad V_i = W_V X_i + b
\end{equation}

We apply the Alibi self-attention mechanism, and obtain the final output $O = (O_1, O_2)$:

\begin{equation}
    A(i,j) = \frac{Q_{i}K_{j}}{\sqrt{d}} +m_{h}(i-j),\quad m_{h} = 1, \quad%
    O_2 = A_{2,1} V_1 + A_{2,2} V_2
\end{equation}

To prevent the semantic representation from affecting the prediction, an optimal solution to this problem will assign zeros to $W_{Q}, W_{K}$, namely: $Q=K= \begin{pmatrix} 
0 \\ 0
\end{pmatrix}$. For the Value projection, we assume:  $V=W_{v}+b, \text{with } W_{v} = \begin{pmatrix} 
0 \\ 0 
\end{pmatrix}, b\neq\begin{pmatrix} 
0 \\ 0
\end{pmatrix}$. 

\myparagraph{Relevance propagation.}
Following our settings, we get: 
\begin{equation}
 A = \begin{pmatrix} 
0 & 0 \\ 1 & 0
\end{pmatrix},\text{ giving us:}\quad  Attention(Q,K,V)= A \times V =  \begin{pmatrix} 
0 & 0 \\ 1 & 0
\end{pmatrix}b
\end{equation}
For the backwards relevancy propagation, the relevancy scores of $Attention$ are distrusted between $A$ and $V$ based on the standard $\text{Gradient} \times \text{Input}$. Regardless, we now consider how relevancy scores of both terms $\mathcal{R}_{V},\mathcal{R}_{score}$ are propagated back to the input ${x}$. 
\begin{itemize}
 \item \textbf{$\mathcal{R}_{V}\rightarrow \mathcal{R}_{x}.\quad$} recall that $W_{v}$ are assigned with zeros. Given that the fundamental $\epsilon$-LRP rule for affine transformations ignores the bias term completely and uses the weights $W_{v}$ as a measure of weighting the relevancy scores, we get that zero relevancy scores are produced for both tokens.
\item \textbf{$\mathcal{R}_{A}\rightarrow \mathcal{R}_{x}.\quad$} Following the standard LRP rules, the positional terms would be considered a constant, and therefor, 100\% of the distribution would be directed to the queries and keys. Given that $W_{Q}, W_{K}$ are assigned with zeros, we again get zero relevancy scores being propagated to $x$. 
 \end{itemize}
 Given that the relevancy scores propagated back from the attention layer are all assigned with zeros, we will get a final attribution map of zeroes, indicating the same level of impact for all tokens. This of course, yields an unfaithful explanation. In contrast, our method makes positional terms attributable, maintaining relevancy scores that would otherwise be zeroed out due to existing rules.
\end{proof}

\clearpage

\renewcommand{\thesection}{\Alph{section}} 
\setcounter{section}{5}

\section{Conservation Percentage Results \label{app:conservationPercentageResults}}
{
We measure the sum of relevance for DeiT model at different capacities: Tiny, Small, and Base. The figure provides clear visualization for the violation of the conservation property, with PE relevancy constituting 16.75\%, 22.39\%, and 9.22\% out of the total relevancy for Tiny, Small, and Base DeiT models, respectively.

\begin{figure}[h]
    \centering  
    \includegraphics[width=0.5\linewidth]{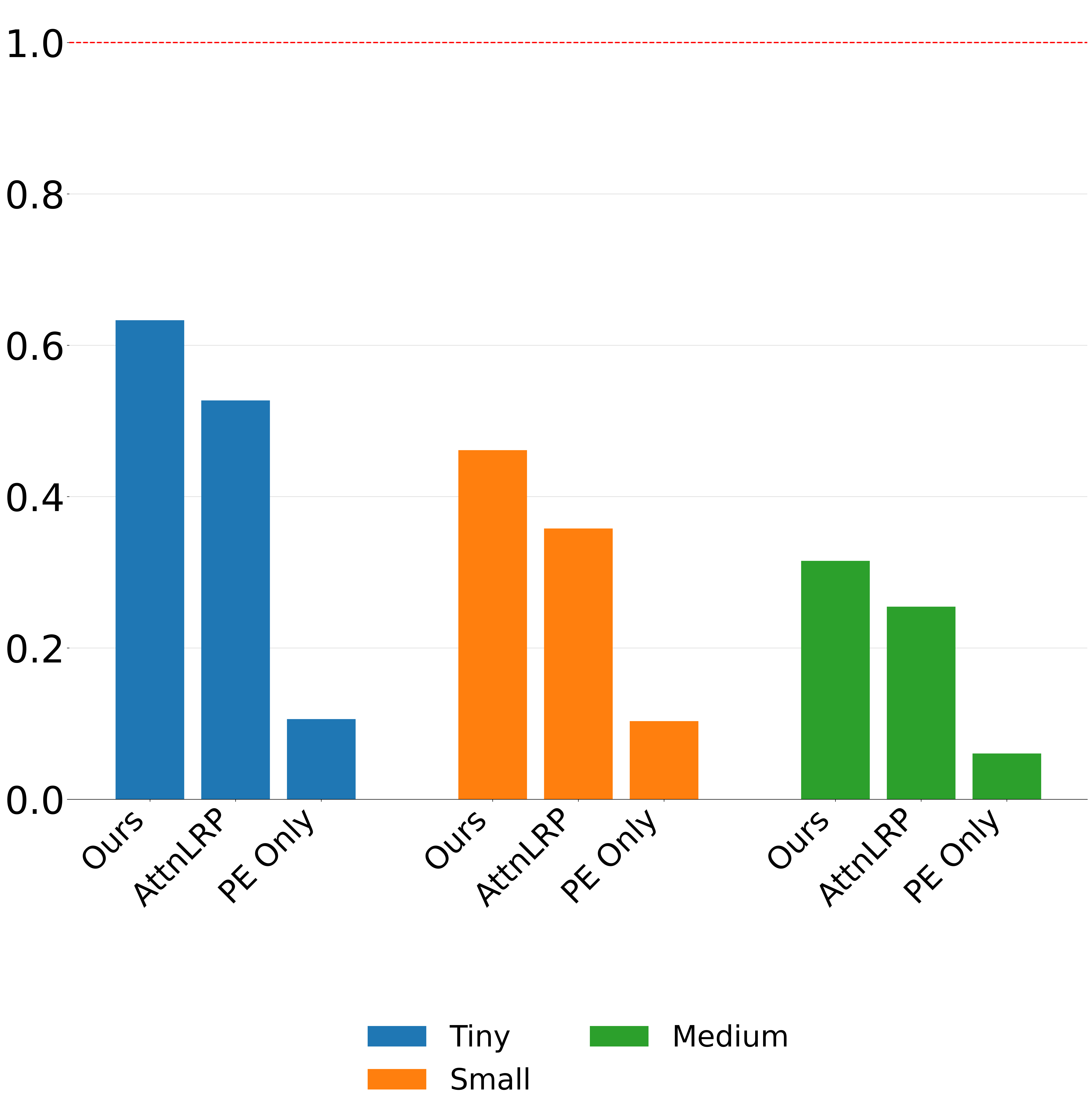}
    \vspace{-4pt}
    \caption{\small We assess both the positional relevance and the non-positional relevance for DeiT models at different capacities, visualizing the violation of conservation rule, with high non-negligible ratio between the entire relevance in the models 'ours' and positional-associated relevance 'PE Only'.}
    \label{fig:conservationVisFig}
\end{figure}




\end{document}